\documentclass{article}

\PassOptionsToPackage{numbers, compress}{natbib}

 \usepackage[preprint]{neurips_2026}

\usepackage{bm}
\usepackage{amsmath}
\usepackage{amssymb}
\usepackage{amsthm}

\theoremstyle{remark}
\newtheorem{remark}{Remark}

\theoremstyle{plain}
\newtheorem{theorem}{Theorem}
\newtheorem{proposition}{Proposition}

\newcommand{\bs}{\mathbf{s}}

\newcommand{\dtra}{\mathcal{D}_{\text{tr}}}
\newcommand{\dval}{\mathcal{D}_{\text{v}}}

\newcommand{\tra}{\text{tr}}
\newcommand{\val}{\text{v}}

\newcommand{\bx}{\mathbf{x}}
\newcommand{\bz}{\mathbf{z}}
\newcommand{\bg}{\mathbf{g}}
\newcommand{\be}{\mathbf{e}}
\newcommand{\bl}{\mathbf{l}}

\newcommand{\mcl}[1]{\mathcal{#1}}
\newcommand{\mclb}{\mathcal{B}}
\newcommand{\mcld}{\mathcal{D}}
\newcommand{\mcll}{\mathcal{L}}

\newcommand{\mcls}{\mathcal{S}}

\newcommand{\btheta}{\bm{\theta}}

\usepackage[utf8]{inputenc} 
\usepackage[T1]{fontenc}    
\usepackage{hyperref}       
\usepackage{url}            
\usepackage{booktabs}       
\usepackage{amsfonts}       
\usepackage{nicefrac}       
\usepackage{microtype}      
\usepackage{xcolor}         
\usepackage{hyperref}       
\usepackage{url}            
\usepackage{booktabs}       
\usepackage{amsfonts}       
\usepackage{nicefrac}       
\usepackage{microtype}      
\usepackage{xspace}
\usepackage[utf8]{inputenc} 
\usepackage[T1]{fontenc}    
\usepackage{hyperref}       
\usepackage{url}            
\usepackage{booktabs}       
\usepackage{amsfonts}       
\usepackage{nicefrac}       
\usepackage{microtype}      
\usepackage{xcolor}         
\usepackage{xspace}
\usepackage{bbm}
\usepackage{graphicx}
\usepackage{todonotes} 
\usepackage{subcaption}
\usepackage{enumitem}
\usepackage{hyperref}       
\usepackage{url}            
\usepackage{amsmath}
\usepackage{wrapfig}
\usepackage{booktabs}       
\usepackage{amsfonts}       
\usepackage{nicefrac}       
\usepackage{microtype}      
\usepackage{algorithm}
\usepackage{algorithmic}
\title{Formatting Instructions For NeurIPS 2026}

\title{Online Data Selection for Instruction Tuning via Gaussian Processes}

\author{%
  Jun Wang \hspace{1.5em} Quoc Phong Nguyen \hspace{1.5em} Julien Monteil \hspace{1.5em} Vu Nguyen \\
  Amazon \\
  \texttt{\{junwc, qphong, jul, vutngn\}@amazon.com}
}

\newcommand{\datastyle}[1]{\textsc{#1}\xspace}
\newcommand{\modelstyle}[1]{\textsc{#1}\xspace}

\newcommand{\mmlu}{\datastyle{MMLU}}
\newcommand{\tydiqa}{\datastyle{TyDiQA}} 
\newcommand{\samsum}{\datastyle{SamSum}}
\newcommand{\alpaca}{\datastyle{Alpaca}}
\newcommand{\less}{\datastyle{Less}}
\newcommand{\flan}{\datastyle{Flan V2}}
\newcommand{\datacot}{\datastyle{CoT}}
\newcommand{\dolly}{\datastyle{Dolly}}
\newcommand{\openassistant}{\datastyle{OpenAssistant}}

\newcommand{\qwen}{\modelstyle{Qwen-3-4b}}   
\newcommand{\llama}{\modelstyle{Llama-2-7b}}   
\newcommand{\llamat}{\modelstyle{Llama-3.1-8b}} 
\newcommand{\mistral}{\modelstyle{Mistral-7b}}

\newcommand{\termstyle}[1]{\texttt{#1}}

\newcommand{\gaia}{\textbf{GAIA}}
\newcommand{\greats}{\termstyle{GREATS}}
\newcommand{\maxloss}{\termstyle{MaxLoss}}
\newcommand{\gradnorm}{\termstyle{GradNorm}}
\newcommand{\sbert}{\termstyle{SBERT}}
\newcommand{\rholoss}{\termstyle{RHOLoss}}

\begin{document}

\maketitle

\begin{abstract}
With Large Language Model (LLM) pre-training and fine-tuning shifting its focus from data volume to data quality, quality data selection has emerged as a critical research topic. Existing online data selection methods for LLM training are typically ``batch-constrained'', limiting optimization to local utility within random batches. To overcome this, we propose \textbf{GAIA} (\textbf{G}lobal \textbf{A}daptive \textbf{I}nstruction tuning via G\textbf{A}ussian processes), a framework that formulates data valuation as a global estimation process. GAIA employs Gaussian Process regression to model continuous utility manifolds across the semantic space, utilizing an adaptive strategy fusion mechanism to dynamically prioritize high-utility samples. By casting the strategy-posterior update as an instance of the classical fixed-share Hedge framework for tracking the best expert, we inherit a dynamic-regret guarantee that characterizes GAIA's robustness under non-stationary quality scores during training. Empirical evaluations on three datasets demonstrate that GAIA significantly outperforms state-of-the-art baselines like \greats, establishing our method as a scalable and robust solution for efficient instruction tuning.
\end{abstract}
\section{Introduction}

Modern Large Language Models (LLMs) owe their remarkable success to large-scale pretraining on massive, heterogeneous datasets \cite{achiam2023gpt}. These models are typically adapted to specialized downstream tasks---ranging from dialogue generation to multimodal reasoning---via fine-tuning or instruction alignment \cite{guo2025deepseek, wu2025medreason}. Despite their empirical prowess, LLMs remain susceptible to factual hallucinations, spurious correlations, and ingrained social biases \cite{ferrara2023should}. These vulnerabilities are often traceable to noisy, redundant, or unrepresentative training samples. Consequently, a fundamental challenge in LLM alignment is \emph{data valuation}: identifying which samples meaningfully enhance model generalization and which act as detrimental noise.

Data valuation frameworks, such as influence functions and Shapley value-based methods, provide principled mechanisms for estimating the contribution of individual data points to model performance \cite{ghorbani2019data, koh2017understanding}. While theoretically robust for tasks like mislabeled data detection and bias diagnosis, these classical approaches often struggle with the practical scalability and high-dimensional gradients inherent in modern LLM fine-tuning pipelines. 

To address these scalability constraints, recent works such as \greats{}~\cite{wang2024greats} have pioneered online data selection. However, we observe that existing methods are fundamentally hindered by a \textbf{batch-constrained paradigm}. Specifically, these approaches dynamically evaluate data quality to select an optimal \emph{mini-batch} from a larger \emph{candidate pool}. Crucially, because this candidate pool is still constructed via uniform random sampling, the selection mechanism can only achieve \emph{local optimality}. This design introduces a critical stochastic vulnerability: an unfavorable draw may yield a candidate pool saturated with low-quality samples, forcing the model to choose the ``best of the worst.'' Thus, the efficacy of online selection is inherently upper-bounded by the quality of the stochastic candidate pool, making it limited by the very randomness it seeks to mitigate.

In this paper, we propose \textbf{GAIA} (\textbf{G}lobal \textbf{A}daptive \textbf{I}nstruction Tuning via G\textbf{A}ussian Processes), a novel framework that transcends the batch-constrained limitation by transforming data valuation into a proactive global guidance mechanism. Instead of treating quality scores as passive filters within a batch, \gaia{} leverages them to steer the construction of the candidate pools themselves. Our contributions are three-fold:
\begin{itemize}[leftmargin=*]
    \item We model the latent quality function over the entire training corpus using a \textbf{Gaussian Process (GP)}, enabling principled generalization of valuation signals to unseen examples throughout the training process.
    \item To ensure computational feasibility at scale, we discretize the GP into a finite set of \textbf{strategies} (function samples from the GP prior) and maintain a posterior distribution over them using a fixed-share Hedge update with a top-$k$ likelihood. Casting this update as an instance of the classical framework for tracking the best expert, we inherit a dynamic-regret guarantee that characterizes \gaia{}'s robustness to the non-stationarity of quality scores as model parameters evolve during training.
    \item We demonstrate that \gaia{} can be seamlessly integrated with existing selection methods (e.g., \greats{}). In this hybrid configuration, \gaia{} biases the global sampling toward high-quality regions, while subsequent selection refines the batch for optimal quality and diversity.
\end{itemize}

By bridging theoretical data valuation with practical scalability, \gaia{} provides a principled, computationally efficient framework that explicitly optimizes for downstream generalization, offering a more robust alternative to existing data selection paradigms.
\section{Related Works}

\textbf{Offline Data Valuation.} Most offline methods estimate data importance post-training or via repeated retraining \cite{just2023lava,sava,zhu2025kairos}. While recent works scale Shapley-value \cite{zhangshapley} and influence-function \cite{choe2024your} approaches to LLMs, high computational costs remain. Related fields like dataset distillation and coreset construction \cite{wang2018dataset,mirzasoleiman2020coresets} similarly rely on expensive bilevel optimization. Furthermore, influence-function techniques \cite{koh2017understanding,pruthi2020estimating} assume local linearity around fixed parameters, failing to capture the evolving data utility in the highly non-convex landscapes of LLMs.

The static nature of offline selection---performing pruning only once prior to training---often yields suboptimal results as data informativeness shifts across training stages \cite{wang2024rethinking}. Additionally, many offline pipelines require heavy preprocessing or auxiliary models \cite{xie2023doremi}, increasing system complexity. These limitations necessitate adaptive strategies that evolve alongside the model.

\textbf{Online Data Valuation.} Online methods perform adaptive selection at the batch level, identifying informative examples based on the model's current state. Unlike static approaches \cite{xia2024less,wettigqurating}, this dynamic strategy allows data value to evolve with learning progress, capturing stage-specific relevance for faster convergence and better generalization. By operating on small batches, online selection also reduces preprocessing overhead; for instance, \cite{wang2024greats} uses a greedy Taylor-expansion-based batch optimization.

Recent lightweight proxies use signals like loss, gradient norms, or forgetting events \cite{toneva2019empirical,paul2021deep}. While efficient, these heuristics are often noisy, hyperparameter-sensitive, and lack a unified, task-aware definition of value, making it difficult to align selection with downstream objectives \cite{sorscher2022beyond}.

\section{Preliminaries}


\begin{figure*}
    \centering
    \small
    \includegraphics[width=0.85\linewidth]{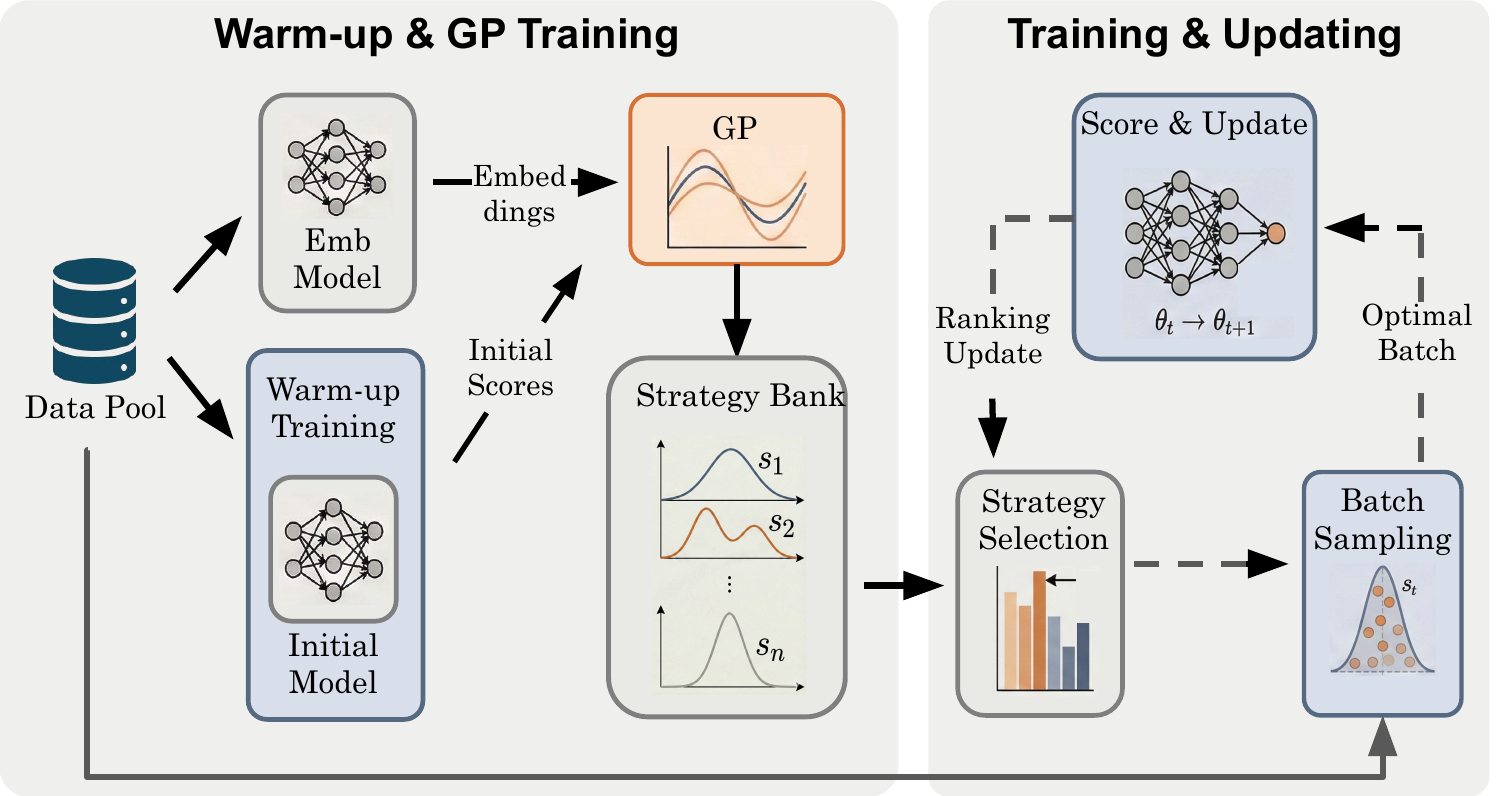}
    \caption{\textbf{Workflow.} Our framework operates in two distinct phases to decouple valuation from training: 
(1) \textbf{Warm-up \& GP Training}: A lightweight \textbf{Emb}eding \textbf{Model} maps the raw data pool into a semantic embedding space. Concurrently, a \textbf{warm-up training} phase on a small dataset set collects initial utility scores. These scores and embeddings are used to fit the Gaussian Process (GP) model, establishing an initial global utility manifold. 
(2) \textbf{Iterative Training \& Adaptive Updating}: In the main loop, an \textbf{adaptive strategy} mechanism selects the optimal strategy from the Strategy Bank based on real-time feedback. This selected strategy guides the sampling of high-utility batches from the global pool for model updates. The resulting feedback is used to dynamically update the strategy selection probabilities, ensuring robust and efficient data selection.}
    \label{fig:enter-label}
    \vspace{-6pt}
\end{figure*}

We use $\dtra, \dval$ for the training and validation sets, respectively, $\bz = (\bx, y)$ for examples, and $\btheta_t$ for model parameters at iteration~$t$. When the source is relevant, we write $\bz^{(\tra)} \in \dtra$ and $\bz^{(\val)} \in \dval$. Training minimizes the empirical loss $\mcll(\dtra; \btheta) \triangleq 1 / |\dtra| \sum_{\bz \in \dtra} l(\bz; \btheta)$ via SGD: at iteration~$t$, a mini-batch $\mclb_t \subset \dtra$ is drawn and $\btheta_{t+1} = \btheta_t - \eta_t \sum_{\bz \in \mclb_t} \bg_t(\bz)$, where $\eta_t > 0$ is the learning rate and $\bg_t(\bz) \triangleq \nabla l(\bz; \btheta_t)$. Full notation is summarized in Table~\ref{tab:notation} (Appendix~\ref{app:notation}).

This work addresses \emph{online data selection}: adaptively choosing~$\mclb_t$ at each iteration. We first review score-based online selection methods, then motivate the need for a surrogate model over training-set quality scores. In Section~\ref{sec:approach}, we instantiate this surrogate as a Gaussian process, which enables principled generalization of observed quality scores to unseen examples. A self-contained background, including the kernel definition and the cost of exact inference, is provided in Appendix~\ref{app:gp}.

Given a randomly sampled candidate batch~$\tilde{\mclb}_t \subset \dtra$ at iteration~$t$, online data selection methods select a refined training batch~$\mclb_t \subseteq \tilde{\mclb}_t$ (used for the SGD update) by evaluating a quality score~$q(\bz)$ for each~$\bz \in \tilde{\mclb}_t$. Proposed scores include gradient alignment with validation data~\cite{wang2024greats}, gradient norms, per-sample loss, and embedding similarity (see Section~\ref{sec:experiment}), and our framework is agnostic to this choice. As a representative example, \greats{}~\cite{wang2024greats} uses~$q(\bz) \triangleq \bg_t(\bz) \cdot \bg_t(\bz^{(\val)})$ and greedily selects~$\mclb_t$ to balance quality and diversity, computed efficiently via the ghost inner-product technique.

However, such methods sample~$\tilde{\mclb}_t$ uniformly at random, ignoring historical quality information. This motivates our central question: \textit{can online data selection be improved by using observed quality scores to guide the sampling of~$\tilde{\mclb}_t$?}

\begin{remark}
\label{remark:skiprefine}
If $\tilde{\mclb}_t$ is already sampled from high-quality regions of $\dtra$, the refinement step (selecting~$\mclb_t \subseteq \tilde{\mclb}_t$ via quality/diversity criteria, e.g.~\greats{}) may be skipped, using~$\mclb_t = \tilde{\mclb}_t$ directly.
\end{remark}

While computing~$q(\bz)$ per example is cheap, doing so across all of~$\dtra$ every iteration is prohibitive. We address this with a surrogate model that predicts scores without explicit gradient computation.

\section{\textbf{GAIA} framework: Strategy-Guided Online Data Selection} \label{sec:approach}
Our goal is to improve online data selection by leveraging historical quality scores of training data to guide the sampling of candidate batches. Specifically, rather than sampling~$\tilde{\mclb}_t$ uniformly at random as in \greats{}~\cite{wang2024greats}, we use observed quality scores from previous iterations to construct an informed sampling distribution over~$\dtra$. The resulting candidate batch~$\tilde{\mclb}_t$ can then be refined using existing selection methods (e.g., \greats{}) to produce the final training batch~$\mclb_t$.

Recall that at each iteration, we evaluate quality scores for all examples in the training batch. By iteration~$t$, we have accumulated observations~$\mcld_t \triangleq \{(\bz, q(\bz))\}_{\bz \in \bigcup_{t'=1}^{t-1} \mclb_{t'}}$. Since~$\mcld_t$ covers only a subset of~$\dtra$, we cannot directly evaluate~$q(\bz)$ for arbitrary training examples.

\subsection{Strategy-Guided Batch Sampling}\label{sec:batch_sampling}
To address this, we model the latent quality function as a random draw from a GP:~$s \sim \text{GP}(0, \kappa)$ where~$\kappa$ denotes the SE kernel (see Appendix~\ref{app:gp}).
We refer to each function sample~$s$ as a \emph{strategy}, as it provides a complete scoring over~$\dtra$ for candidate batch sampling.
Given a strategy~$s$, we sample each element of~$\tilde{\mclb}_t$ independently from~$\dtra$ according to the categorical distribution
\begin{align}
    p(\bz \mid s) \triangleq \exp(s(\bz) / \tau) \Big/ \sum\nolimits_{\bz' \in \dtra} \exp(s(\bz') / \tau)\ ,
\end{align}
where~$\tau > 0$ controls the sampling temperature.

\begin{remark}
\label{remark:adaptive_temp}
The temperature $\tau$ governs the exploration-exploitation trade-off: lower values concentrate sampling on high-scoring examples, while higher values encourage broader exploration. In practice, $\tau$ can be adapted during training, e.g., gradually increasing to promote exploration as the model converges (see Section~\ref{sec:experiment}).
\end{remark}

A key challenge remains: how to efficiently obtain~$s$ from the GP posterior at each training iteration.
As noted in Appendix~\ref{app:gp}, updating the GP with new observations at every iteration is computationally prohibitive.

\subsection{Efficient Strategy Generation via GP Discretization}

To enable efficient posterior updates and strategy sampling, we discretize the GP by drawing a finite set of function samples from the prior.
Specifically, we sample $m/2$ functions $f_1, \ldots, f_{m/2}$ from $\text{GP}(0, \kappa)$ and set $\mcls \triangleq \{f_1, -f_1, \ldots, f_{m/2}, -f_{m/2}\}$, so that for every strategy $f \in \mcls$ its negation $-f$ is also in $\mcls$.
Since these functions are drawn from the prior, we initialize with a uniform distribution: $p(s) = 1/|\mcls|$ for all $s \in \mcls$.
This discretization preserves the key property encoded by the kernel: similar training examples have similar latent quality scores under each strategy.

\begin{remark}[Symmetry of the strategy set]\label{remark:symmetric_strategies}
The pairing $\{f, -f\}$ ensures that the strategy set has no directional bias along any sampled function: for any two points $\bz, \bz' \in \dtra$, the pair ranks $\bz$ above $\bz'$ under $f$ if and only if it ranks $\bz$ below $\bz'$ under $-f$, with equal weight. The pair thus averages out directional preferences, producing a joint marginal $\frac{1}{2}[p(\bz \mid f) + p(\bz \mid -f)]$ that is strictly closer to uniform than either $p(\cdot \mid f)$ or $p(\cdot \mid -f)$ individually. This property supports the interpretation of the uniform mixing step introduced below as a return toward unbiased exploration: when the mixing weight dominates, the algorithm samples from an ensemble that treats high-score and low-score regions symmetrically.
\end{remark}

\textbf{Motivation for a non-stationary posterior update.} A standard Bayesian posterior update over $\mcls$ presumes that observations are generated by a single time-invariant strategy. This assumption does not hold in our setting. The observed quality score $q(\bz)$ depends on the current model parameters $\btheta_t$, which change throughout training, so the strategy that best explains recent batches can differ from the one that best explained earlier batches. Under a standard Bayesian update, the posterior concentrates on whichever strategy accumulated the most evidence in the past and cannot recover once a different strategy becomes more informative. To address this, we adopt the \emph{fixed-share Hedge} update of \citet{herbster1998tracking}, which tempers each batch's evidence by a learning rate $\lambda$ and re-mixes the posterior with a uniform distribution at every step. The uniform mixing ensures a strictly positive floor on every strategy's probability, so no strategy is permanently ruled out. This update admits an exact Bayesian interpretation as the filtering distribution under a Markovian switching prior over strategies.

Given observations $\mcld_t$, the resulting update reduces to an $\mathcal{O}(|\mcls|)$ incremental two-step procedure:
\begin{align}
    \forall s \in \mcl{S},\ \hat{p}(s|\mcld_t) \propto p(s|\mcld_{t-1}) \, p \bigl(q(\mclb_t) | s \bigr)^\lambda\ ,\quad p(s|\mcld_t) = \frac{\alpha}{m} + (1 - \alpha)\,\hat{p}(s|\mcld_t)\ ,
\end{align}
where $q(\mclb_t) \triangleq \{q(\bz)\}_{\bz \in \mclb_{t}}$, $\lambda > 0$ is a learning rate controlling how strongly each batch's evidence shifts the posterior, and $\alpha \in [0,1)$ is the \emph{mixing rate}. The uniform floor $\alpha/m$ propagates into the next iteration's Hedge step, allowing a previously down-weighted strategy to regain mass once it starts producing high-likelihood batches. Setting $\alpha = 0$ recovers pure Hedge, which corresponds to a stationary Bayesian posterior. Section~\ref{sec:regret} states the dynamic-regret guarantee that this update inherits from fixed-share Hedge.
Under the Gaussian likelihood in Equation ~\eqref{eq:gausslik}, the likelihood factorizes as
\begin{align}
    p_{\text{Gauss}} \bigl(q(\mclb_t) | s \bigr) \triangleq \prod_{\bz \in \mclb_t} \mathcal{N} \bigl(q(\bz); s(\bz), \sigma_n^2 \bigr)\ ,
\end{align}
where $p_{\text{Gauss}}$ highlights the use of the Gaussian likelihood.

\begin{algorithm}[t]
\caption{Strategy-Guided Online Data Selection}
\label{alg:strategy-selection}
\begin{algorithmic}[1]
    \REQUIRE $\dtra$, $\dval$, \#strategies $m$, Hedge learning rate $\lambda$, mixing rate $\alpha$, temperatures $\tau, \beta$, $k$ , target model parameters $\btheta$
\STATE \textbf{Initialization:}
\STATE \quad Obtain embeddings $\{\be(\bz)\}_{\bz \in \dtra}$
\STATE \quad Fit GP kernel hyperparameters on warm-up data
\STATE \quad Sample strategies $\mcls = \{f_1, -f_1, \ldots, f_{m/2}, -f_{m/2}\}$ with $f_j \sim \text{GP}(0, \kappa)$
\STATE \quad Initialize $p(s) \gets 1/m$ for all $s \in \mcls$
\FOR{$t = 1, 2, \ldots, T$}
    \STATE \textbf{Strategy Selection:}
    \STATE \quad Sample strategy $s \sim p(\cdot | \mcld_{t-1})$
    \STATE \textbf{Candidate Batch Sampling} (see Remark~\ref{remark:adaptive_temp})\textbf{:}
    \STATE \quad Sample $\tilde{\mclb}_t \subset \dtra$ where each $\bz \in \tilde{\mclb}_t$ is drawn with $p(\bz|s) \propto \exp(s(\bz)/\tau)$
    \STATE \textbf{Batch Refinement} (optional, see Remark~\ref{remark:skiprefine})\textbf{:}
    \STATE \quad Compute $\{q(\bz)\}_{\bz \in \tilde{\mclb}_t}$ using $\dval$
    \STATE \quad Refine $\tilde{\mclb}_t$ to $\mclb_t \subseteq \tilde{\mclb}_t$
    \STATE \textbf{Training:}
    \STATE \quad Update $\btheta_t \to \btheta_{t+1}$ using $\mclb_t$
    \STATE \textbf{Strategy Posterior Update (fixed-share Hedge):}
    \STATE \quad $\forall s \in \mcls$: $\hat{p}(s|\mcld_t) \propto p(s|\mcld_{t-1}) \times p_{\text{top-}k}(q(\mclb_t) | s)^\lambda$
    \STATE \quad $\forall s \in \mcls$: $p(s|\mcld_t) \gets \alpha/m + (1 - \alpha)\, \hat{p}(s|\mcld_t)$
\ENDFOR
\STATE \textbf{return} $\btheta_T$
\end{algorithmic}
\end{algorithm}

\subsection{Top-\texorpdfstring{$k$}{k} Likelihood}

While the Gaussian likelihood in Equation ~\eqref{eq:gausslik} is commonly employed in GP modeling, it is not well-suited to our setting: the range of observed quality scores can vary drastically across iterations as the model parameters evolve.
To address this, we propose a top-$k$ likelihood that depends only on the relative ordering of quality scores within each batch.

Let $\bz_{(1)}, \ldots, \bz_{(|\mclb_t|)}$ denote the examples in $\mclb_t$ sorted by observed quality score in descending order, i.e., $q(\bz_{(1)}) \geq q(\bz_{(2)}) \geq \cdots \geq q(\bz_{(|\mclb_t|)})$.
We define the top-$k$ likelihood as the probability that strategy $s$ ranks the same $k$ examples highest:
%
\begin{align}
    p_{\text{top-}k} \bigl(q(\mclb_t) | s \bigr) \triangleq \prod\nolimits_{j=1}^{k} \left[ \exp(s(\bz_{(j)}) / \beta) \Big/ \sum\nolimits_{i=1}^{|\mclb_t|} \exp(s(\bz_{(i)}) / \beta) \right]\ ,
\end{align}
where $\beta > 0$ is a temperature parameter. 
Algorithm~\ref{alg:strategy-selection} summarizes the framework for \gaia{}.

\subsection{Dynamic-Regret Guarantee}\label{sec:regret}

The posterior update in Section~\ref{sec:batch_sampling} is designed to be robust to non-stationarity, and fixed-share Hedge is the canonical online-learning algorithm with this property. This subsection makes the robustness precise. For a sequence of loss functions $\ell_t(s) \triangleq -\log p_{\text{top-}k}(q(\mclb_t) | s)$, the natural non-stationary baseline is a comparator \emph{sequence} $\bs^* = (s_1^*, \ldots, s_T^*)$ that may use a different strategy at each iteration, subject to switching between strategies at most $K$ times. Fixed-share Hedge is known to track such a comparator with sublinear regret.

\begin{theorem}[Dynamic regret; cf.\ \citealp{herbster1998tracking,cesa2006prediction}]
\label{thm:regret-main}
As shown in Appendix~\ref{app:regret}, the update in Section~\ref{sec:batch_sampling} is exactly fixed-share Hedge applied to $\ell_t$ with learning rate $\lambda$ and mixing rate $\alpha$. Let $L$ be a uniform bound on $\ell_t$ over $\mcls$. For any comparator $\bs^* \in \mcls^T$ with at most $K$ switches, there exist choices of $\lambda$ and $\alpha$ such that
\begin{align}
    \sum_{t=1}^T \mathbb{E}_{s \sim p_t}[\ell_t(s)] - \sum_{t=1}^T \ell_t(s_t^*) \leq \mathcal{O}\Bigl( L \sqrt{T \bigl( K \log m + K \log(T/K) + \log m \bigr)} \Bigr)\ .
\end{align}
\end{theorem}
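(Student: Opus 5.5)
The plan has two parts. First, identify the update of Section~\ref{sec:batch_sampling} with fixed-share Hedge. Then reproduce the standard tracking analysis of \citet{herbster1998tracking}, in the form given by \citet{cesa2006prediction}, with explicit constants.

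\textbf{Identification.} Write $w_t(s) \triangleq p(s\mid\mcld_{t-1})$. Since $p_{\text{top-}k}(q(\mclb_t)\mid s)^\lambda = \exp(-\lambda \ell_t(s))$, the first step is exactly the exponential-weights step $\hat w_{t}(s) = w_t(s)e^{-\lambda\ell_t(s)}/\sum_{s'} w_t(s')e^{-\lambda\ell_t(s')}$. The second step is the uniform share $w_{t+1}(s)=\alpha/m+(1-\alpha)\hat w_t(s)$. Together these are the fixed-share update with learning rate $\lambda$ and share rate $\alpha$. The initialization $w_1 = 1/m$ matches. The loss $\ell_t$ is nonnegative because each factor in the top-$k$ likelihood is a probability, so $0 \le \ell_t \le L$.

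\textbf{Per-round potential inequality.} Let $W_t$ be the normalizer. By Hoeffding's lemma applied to $\ell_t\in[0,L]$ under $p_t=w_t$,
\[
-\tfrac1\lambda\log \textstyle\sum_s w_t(s)e^{-\lambda\ell_t(s)} \ge \mathbb{E}_{s\sim p_t}[\ell_t(s)] - \lambda L^2/8 .
\]
For the comparator sequence I use the usual relative-entropy potential argument. Consider the point mass $u_t=\delta_{s_t^*}$. The exponential step gives
\[
\mathbb{E}_{p_t}\ell_t - \ell_t(s^*_t) \le \tfrac1\lambda\bigl(\log \tfrac{1}{w_t(s_t^*)} - \log\tfrac{1}{\hat w_t(s_t^*)}\bigr) + \lambda L^2/8 .
\]
The share step is then handled in two cases:
\begin{itemize}
\item if $s^*_{t+1}=s^*_t$, use $w_{t+1}(s)\ge(1-\alpha)\hat w_t(s)$, so the term $-\log w_{t+1}(s^*_{t+1})$ exceeds $-\log\hat w_t(s^*_t)$ by at most $\log\frac1{1-\alpha}$;
\item at a switch, use $w_{t+1}(s)\ge \alpha/m$, so $-\log w_{t+1}(s^*_{t+1})\le\log(m/\alpha)$.
\end{itemize}
Telescoping over $t$ gives
\[
\text{Regret}\le \tfrac1\lambda\Bigl(\log m + K\log\tfrac m\alpha + (T-K-1)\log\tfrac1{1-\alpha}\Bigr)+\tfrac{\lambda T L^2}{8}.
\]
The main obstacle is this telescoping bookkeeping: the potential must be tracked at the moving comparator, and the switch and non-switch rounds must be charged correctly. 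I will follow the proof of Theorem~5.2 of \citet{cesa2006prediction} verbatim here.

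\textbf{Tuning.} Choose $\alpha=K/(T-1)$ (or $1/T$ if $K=0$). Then $(T-K-1)\log\frac1{1-\alpha}\le K+O(1)$, using $\log\frac1{1-\alpha}\le \alpha/(1-\alpha)$. Also $K\log\frac{m}{\alpha}\le K\log m+K\log(T/K)$. The bracket is therefore $C\triangleq O(K\log m+K\log(T/K)+\log m)$. Setting $\lambda=\sqrt{8C/(TL^2)}$ balances $C/\lambda$ against $\lambda TL^2/8$ and yields $O(L\sqrt{TC})$, which is the claimed bound. The role of the symmetric strategy set and of the GP structure is irrelevant here. The bound holds for any finite $\mcls$ of size $m$ and any adversarial sequence of losses, including ones that depend on $\btheta_t$.
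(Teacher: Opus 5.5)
Your proposal is correct and follows the paper's route. You identify the two-step update with fixed-share Hedge via $p_{\text{top-}k}(q(\mclb_t)\mid s)^\lambda = e^{-\lambda\ell_t(s)}$ and then apply the standard tracking bound with $\alpha = K/(T-1)$ and $\lambda = \sqrt{8C/(TL^2)}$; the only difference is that you carry out the Hoeffding-plus-moving-comparator telescoping, which the paper's proof simply cites from Herbster--Warmuth and Cesa-Bianchi--Lugosi. Your explicit bookkeeping also exposes an extra additive $K$ from $(T-K-1)\log\frac{1}{1-\alpha}\le K$. This term is harmless for the $\mathcal{O}(\cdot)$ claim, since $m\ge 2$ lets it be absorbed into $K\log m$, but the exact-constant version in the paper's appendix omits it.
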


The bound is sublinear in $T$ for any $K$ sublinear in $T$ (i.e., $K \le \sqrt{T}$), so the algorithm remains \emph{no-regret} as long as the optimal strategy does not switch too frequently. This justifies the algorithmic form of our posterior update as a response to non-stationarity: it is the mechanism by which the algorithm avoids committing irreversibly to a strategy that may become suboptimal later.

\textbf{Interpretation.} The bound is $\mathcal{O}\bigl(L \sqrt{T (K \log m + K \log(T / K) + \log m)}\bigr)$ in $T$, sublinear whenever $K$ is sublinear in $T$, and it decomposes into three interpretable cost terms. The factor $\sqrt{T \log m}$ captures the cost of discretizing the GP into $m$ strategies, which is the classical Hedge penalty. The factor $\sqrt{T K \log(T / K)}$ is the additional cost of tracking a comparator that switches up to $K$ times. The loss bound $L = k(\log b + 2 S / \beta)$ ties the bound to \gaia{}'s design parameters: the top-$k$ truncation $k$ enters linearly, the likelihood temperature $\beta$ appears via $1 / \beta$ (trading the regret bound against the sharpness of the likelihood signal), and the scale $S$ is the maximum absolute value of strategy scores on the training set, a finite constant determined by the sampled strategies. Setting $K = 0$ recovers the standard static-regret bound $\mathcal{O}\bigl(L \sqrt{T \log m}\bigr)$ with $\alpha^* = 0$, which corresponds to pure Hedge, and the regret grows with $K$ only through the slowly growing factor $\sqrt{K \log(T / K)}$, so the algorithm remains no-regret as long as the number of switches is $o(T / \log m)$.

\textbf{Practical remarks on tuning $\lambda$ and $\alpha$.} The optimal $\lambda^*$ and $\alpha^*$ in Theorem~\ref{thm:regret-main} depend on the horizon $T$, the loss bound $L$, and the number of switches $K$. Since $K$ is not observable and characterizes how much the identity of the optimal strategy changes as training progresses, setting $\lambda$ and $\alpha$ from the theoretical expressions requires either an estimate of $K$ or an adaptive procedure such as the doubling trick over candidate values of $K$, which incurs an additional logarithmic factor. In our experiments, we bypass this estimation problem and tune $\lambda$ and $\alpha$ empirically on a small grid, using the theoretical bound only as qualitative guidance (small $\alpha$ for sharp exploitation, larger $\alpha$ for broader tracking).

\section{Experiments}
\label{sec:experiment}
We follow the settings of \cite{wang2024greats}. We evaluated our method on three instruction tuning tasks, \mmlu{}~\cite{hendryckstest2021}, \samsum{}~\cite{gliwa-etal-2019-samsum} and \tydiqa{}~\cite{tydiqa}, and on four backbone models, \llama{}~\cite{touvron2023llama}, \llamat{}~\cite{grattafiori2024llama}, \qwen{}~\cite{yang2025qwen3} and \mistral{}~\cite{mistrary-7b}. Training details are presented in Appendix~\ref{app:tasks}.
We evaluate \gaia{} against a diverse set of representative methods that also serve as our score functions, including gradient-based (\gradnorm), loss-prioritizing (\maxloss, \rholoss), semantic-aware (\sbert), and greedy optimization (\greats) methods; detailed descriptions are provided in Appendix~\ref{app:baselines}.

\begin{figure*}[t]
    \centering
    \begin{subfigure}{\linewidth}
        \centering
        \includegraphics[width=0.99\linewidth]{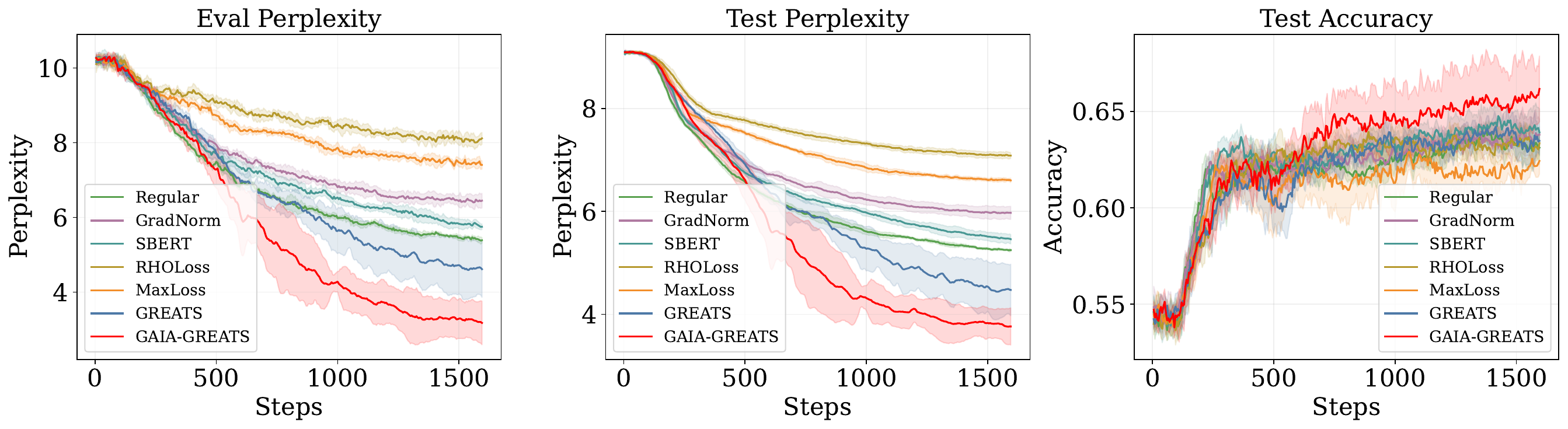}
        \caption{MMLU - Sociology: Validation Perplexity, Test Perplexity, and Accuracy (from left to right)}
        \label{fig:res_sociology}
    \end{subfigure}


    \begin{subfigure}{\linewidth}
        \centering
        \includegraphics[width=0.99\linewidth]{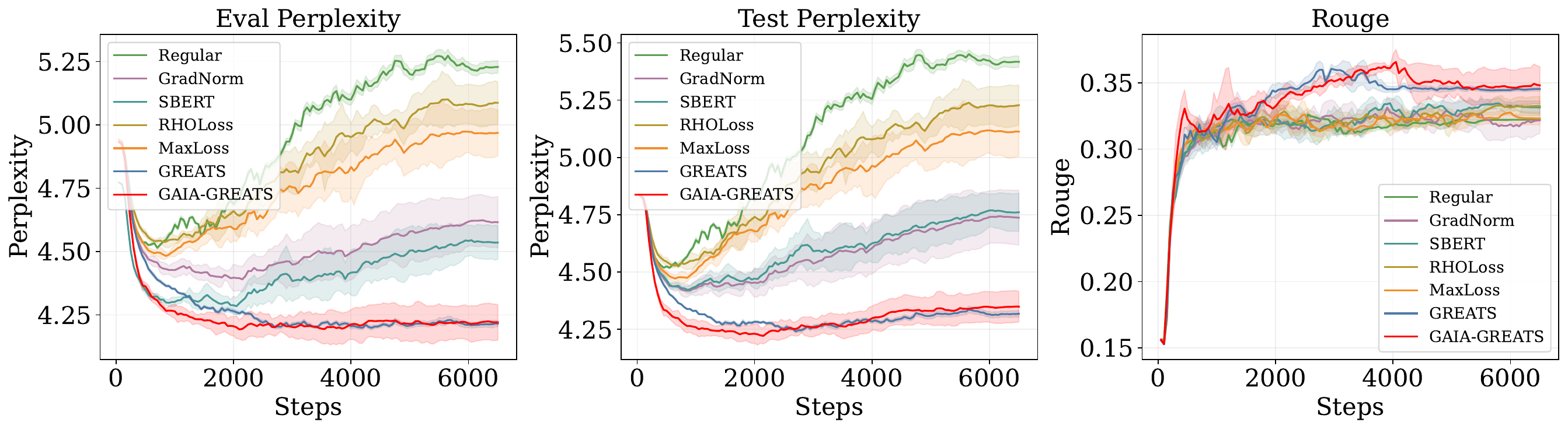}
        \caption{SAMSum: Validation Perplexity, Test Perplexity, and Rouge (from left to right)}
        \label{fig:res_samsum}
    \end{subfigure}

    \caption{Training dynamics on \mmlu-Sociology (top row) and \samsum (bottom row). We benchmark \gaia{}-\greats{} against full-data training (\textit{Regular}) and various selection baselines. The left and middle columns report validation and test perplexity (lower is better), while the right column presents downstream task performance (Accuracy for \mmlu, ROUGE for \samsum). \gaia{}-\greats{} demonstrates superior convergence speed and robust generalization, consistently achieving the lowest perplexity levels and highest task scores across both datasets. All results are reported as the average of three independent runs.}
    \label{fig:main_results}
    \vspace{-8pt}
\end{figure*}

\subsection{Efficient Gaussian Process Construction}
To make GP inference computationally feasible for LLMs, we introduce a decoupled pipeline.

\textbf{Proxy Semantic Space.} Rather than using evolving hidden states, we extract static features $\be(\bx)$ via a sentence transformer \cite{reimers-2019-sentence-bert} \textbf{(robustness evaluated in Appendix \ref{app:embedding_methods})}. To further accelerate inference, SVD projects these embeddings into a lower-dimensional manifold \textbf{(dimensionality analysis in Appendix \ref{app:embedding_dim})}.

\textbf{Warm-up Initialization.} To address label scarcity, we evaluate true utility scores on a small subset ($N=500$) during early training \textbf{(size sensitivity analyzed in Appendix \ref{app:warmup})}. The GP is then fitted to these observations to generalize predictions across the remaining data pool. Implementation details are in Appendix \ref{app:implementation}.

\begin{figure*}[t]
   \vspace{-1pt}
    \centering

    \begin{subfigure}{0.245\textwidth}
        \centering
        \includegraphics[width=\linewidth]{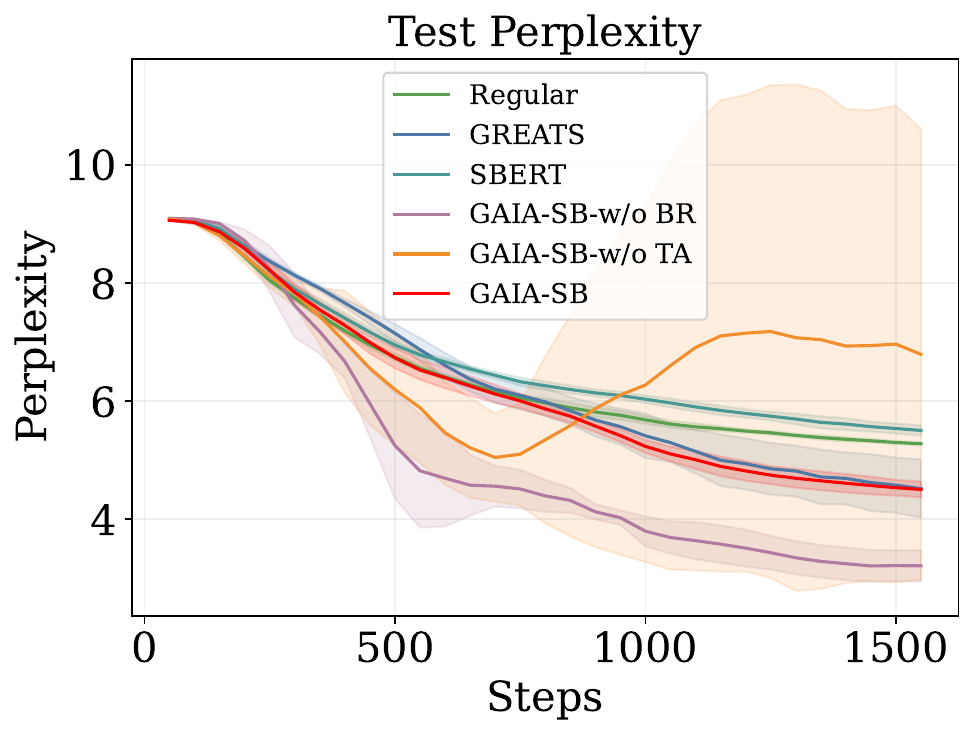}
        \caption{\gaia{}-\sbert}
        \label{fig:res_sociology-s}
    \end{subfigure}
    \hfill
    \begin{subfigure}{0.245\textwidth}
        \centering
        \includegraphics[width=\linewidth]{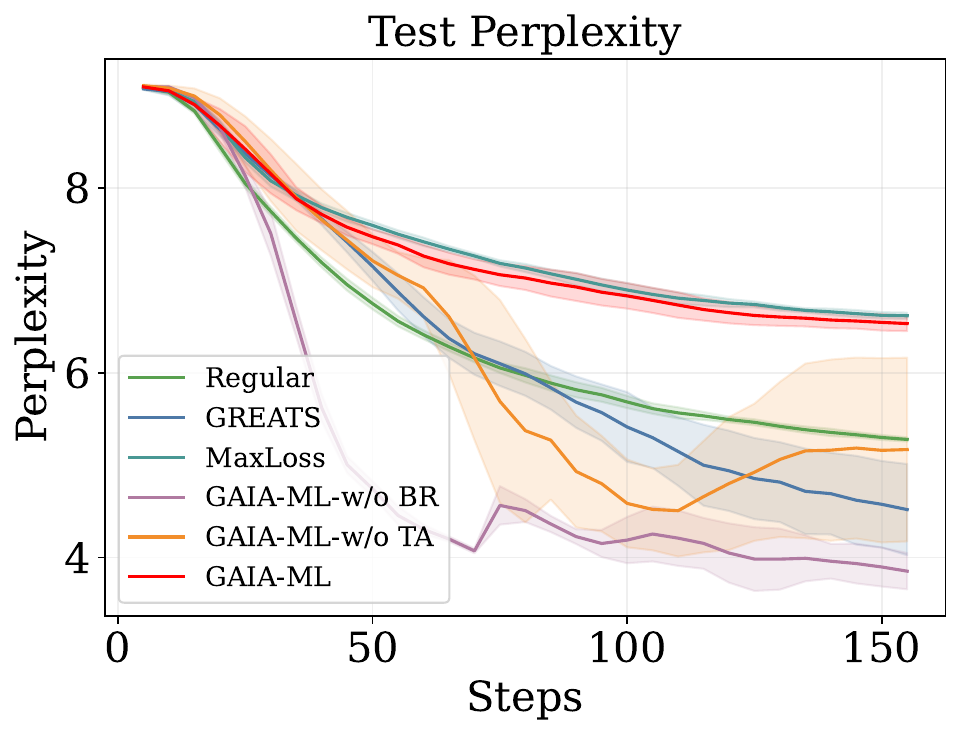}
        \caption{\gaia{}-\maxloss}
        \label{fig:res_samsum-s}
    \end{subfigure}
    \hfill
    \begin{subfigure}{0.245\textwidth}
        \centering
        \includegraphics[width=\linewidth]{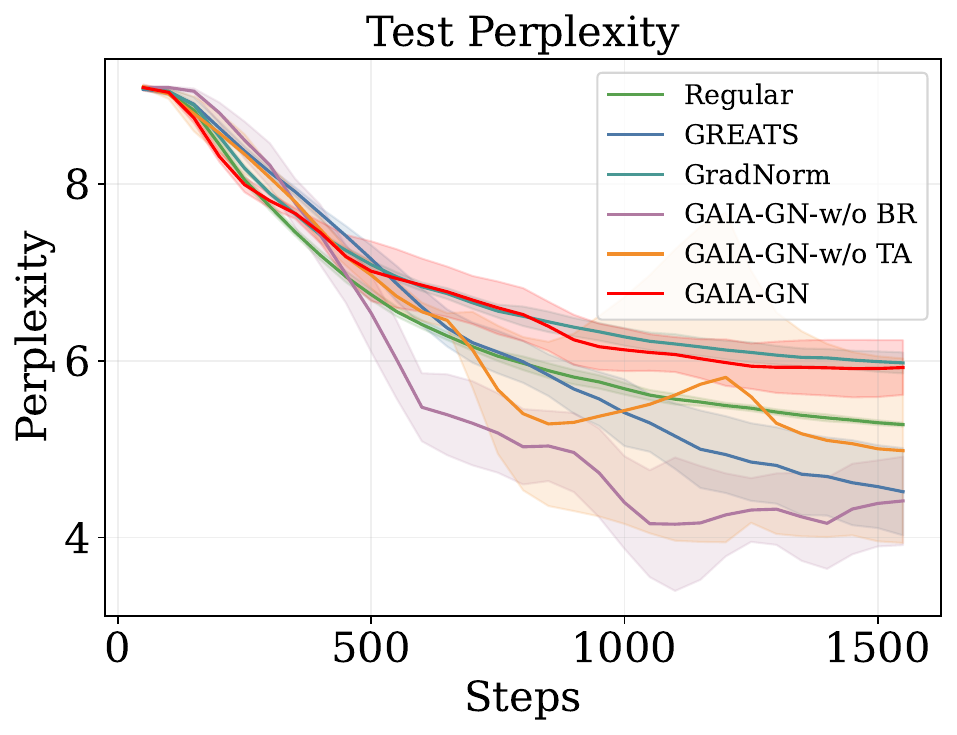}
        \caption{\gaia{}-\gradnorm}
        \label{fig:res_data3}
    \end{subfigure}
    \hfill
    \begin{subfigure}{0.245\textwidth}
        \centering
        \includegraphics[width=\linewidth]{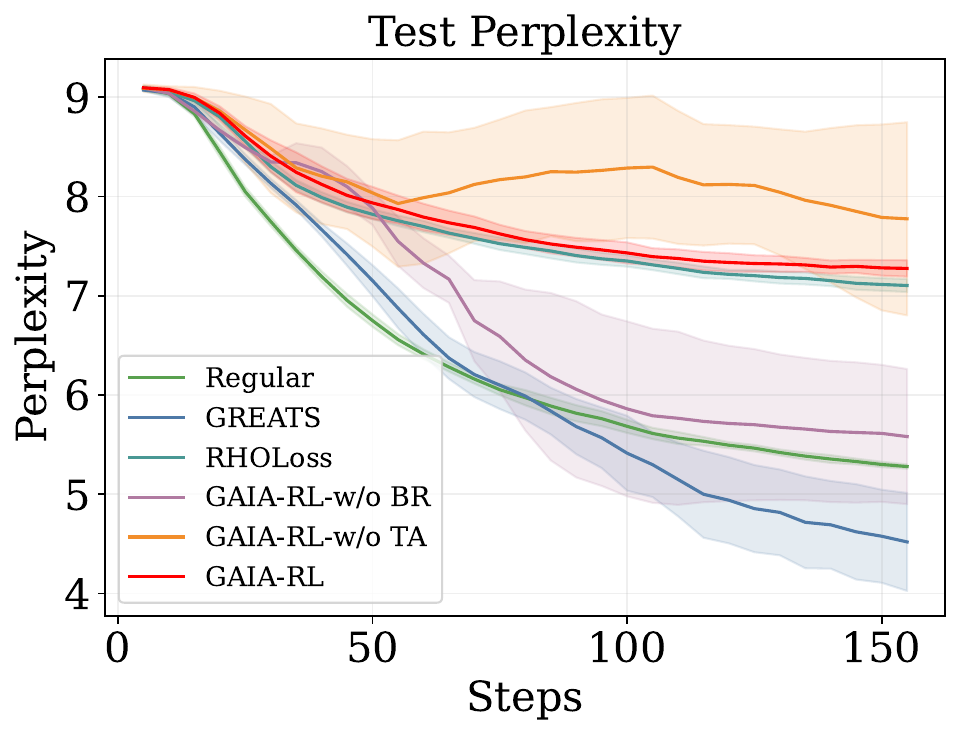}
        \caption{\gaia{}-\rholoss}
        \label{fig:res_data4}
    \end{subfigure}

   \vspace{-2pt}
    \caption{Performance comparison of four different scoring functions on \mmlu-Sociology. All results are reported as the average of three independent runs. The annotation \textbf{w/o BR} denotes the exclusion of the \textit{Batch Refinement} step, while \textbf{w/o TA} indicates the removal of the \textit{Temperature Adaptation} mechanism.}
    \label{fig:scoring}
    \vspace{-6pt}
\end{figure*}

\subsection{Results and Analysis}

Figure \ref{fig:main_results} depicts the comparative training dynamics of \gaia{}-\greats{} (where our \gaia{} framework leverages the \greats{} scoring function) against baselines on \mmlu-Sociology and \samsum{}. Additional MMLU subjects, \textsc{TyDiQA} and Qwen model results are reported in Appendix~\ref{app:mmlu_addition}, Appendix~\ref{app:tydiqa} and Appendix~\ref{app:qwen}, respectively. Notably, these performance gains come with negligible computational overhead: for instance, on \mmlu{} ($\approx$13k samples), it adds only $\approx$23 seconds for GP training. A detailed efficiency analysis is provided in Appendix~\ref{app:time}.

\textbf{Training Dynamics.} As shown in the perplexity plots, \gaia{}-\greats{} (red line) exhibits a significantly faster convergence rate than the baselines on both evaluation and test sets. Among the competitors, \greats{} proves to be the strongest baseline; however, our method demonstrates distinct advantages depending on the task. On \mmlu-Sociology, \gaia{}-\greats{} consistently maintains a lower perplexity throughout training, achieving a minimum of \textbf{3.75$\pm$0.33} compared to 4.37$\pm$0.64 for \greats{}, representing a substantial \textbf{14.13\% reduction}. This indicates a more precise estimation of data utility. To illustrate this behavior, we visualize the specific GP data sampling dynamics for the \mmlu-Sociology task in Appendix~\ref{app:sampling_dynamics}. On the \samsum{} task, while both methods eventually reach a comparable performance floor, \gaia{}-\greats{} distinguishes itself with a much sharper descent in the initial phase. This rapid drop indicates that our Gaussian Process model effectively identifies high-value data points immediately at the start, significantly accelerating the early learning process compared to the gradient-based selection in \greats{}.

\textbf{Downstream Performance.} Our method consistently outperforms or matches the best competitors. For \mmlu-Sociology, \gaia{}-\greats{} achieves a peak accuracy of \textbf{66.50\%$\pm$1.92}, surpassing the second-best method \greats{} (64.34\% $\pm$1.02) with a clear gain of \textbf{+3.57\%}. On the \samsum{} summarization task, we evaluate performance using the ROUGE score. While standard methods suffer from overfitting, evidenced by increasing test perplexity after 2000 steps, both \greats{} and our method maintain robustness. Notably, although the final ROUGE scores of \greats{} and our method are similar, \gaia{}-\greats{} achieves higher ROUGE scores in the early training stages. This further validates that our strategy successfully prioritizes the most informative samples early on, leading to a more efficient optimization path without sacrificing generalization capability.

\textbf{Comparison of Cumulative Data Value Scores.}
We analyze the learning dynamics by comparing the cumulative data value scores of data selected by \gaia{} versus uniform random sampling. 
\begin{wrapfigure}{r}{0.45\textwidth} 
    \small
    \centering
    \includegraphics[width=0.7\linewidth]{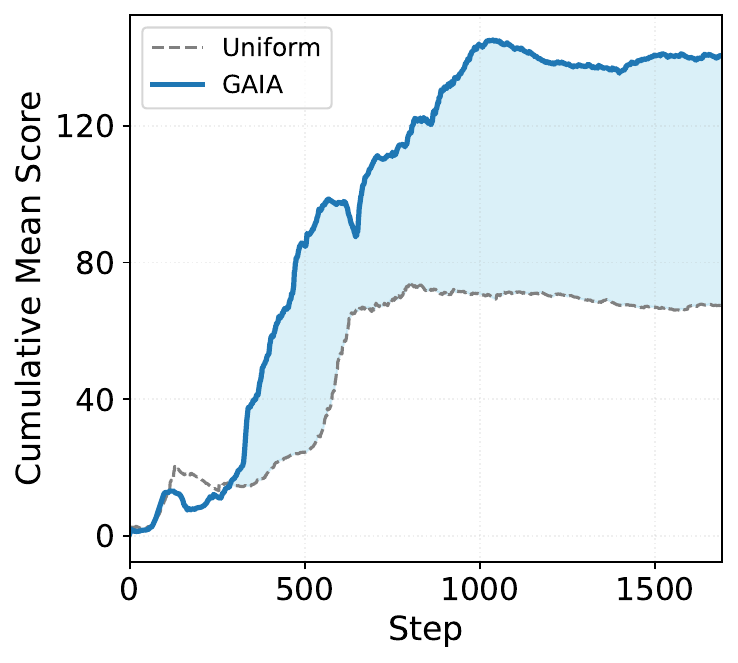}
    \caption{Cumulative data value scores (\llama{}, MMLU-Sociology).}
    \label{fig:scores}
\end{wrapfigure}
As illustrated in Figure~\ref{fig:scores}, \gaia{} consistently prioritizes high-value samples throughout the training process, resulting in a cumulative score that significantly outpaces the baseline. This gap demonstrates that while random sampling inevitably consumes redundant or low-utility data, \gaia{} effectively steers the model toward the most informative regions of the data distribution, maintaining a high selection quality across all training stages.

\textbf{Robustness across Scoring Functions.}
To assess the universality of our framework, we integrate our GP-based sampling mechanism with diverse scoring functions, including \sbert, \gradnorm, \maxloss, and \rholoss. The results in Figure \ref{fig:scoring} demonstrate that our method functions as a robust performance amplifier dependent on signal fidelity. For high-quality metrics like \sbert, our framework significantly boosts convergence, achieving results comparable to the strong \greats{} baseline. Conversely, for weaker metrics (\gradnorm, \maxloss, \rholoss)—where standard selection strategies often fail to outperform regular training—our method yields limited additional gains. \textbf{However, it consistently maintains performance parity or achieves slight improvements over the baselines.} This confirms that our framework acts as a safe enhancement: it effectively exploits informative signals to accelerate learning while ensuring that the training stability is not compromised, even when the underlying utility estimation is noisy.

\textbf{Efficacy of Global Selection.}
To isolate the intrinsic contribution of our global selection mechanism, we conducted an ablation study by disabling the \textit{batch refinement} step. In this \textbf{w/o BR} configuration, the model is trained directly on the raw batch sampled via the GP framework, bypassing any local sub-selection. Remarkably, as illustrated in Figure~\ref{fig:scoring}, this simplified approach achieves performance superior to the full pipeline. This finding provides compelling empirical evidence for the inherent effectiveness of our framework: it demonstrates that the data points identified through global utility estimation are of high quality by themselves. Furthermore, it confirms that the primary driver of our method's success is its capacity to locate globally informative samples, rather than a reliance on local, batch-constrained optimization.

\begin{figure}[ht]
   \vspace{-4pt}
    \centering
    \small
    \begin{subfigure}{0.30\linewidth}
        \centering
        \includegraphics[width=\linewidth]{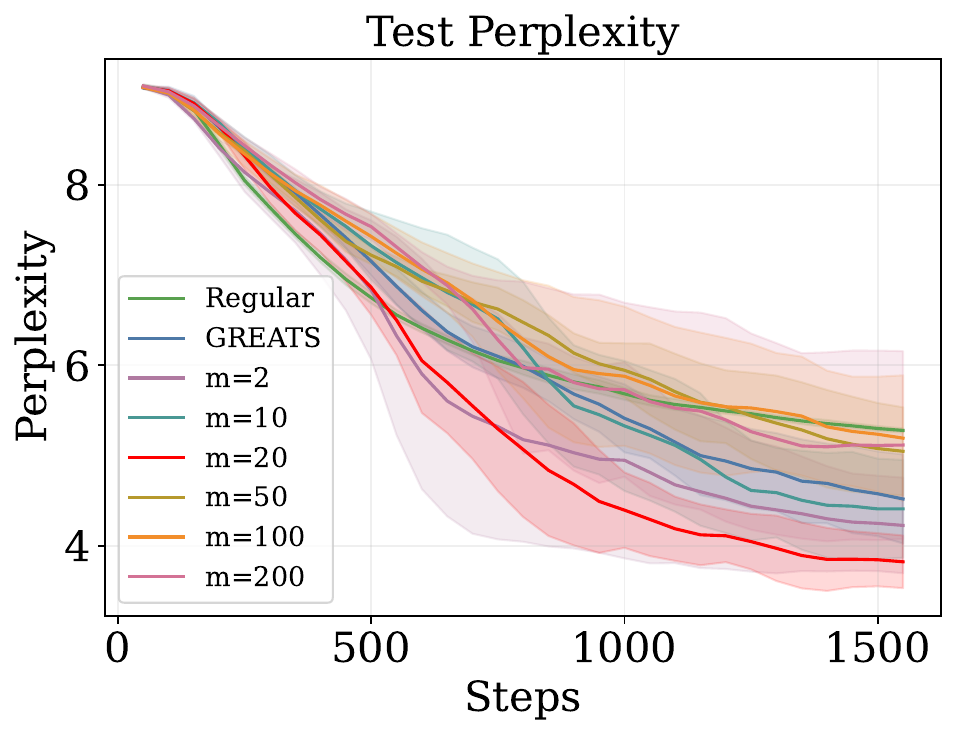}
        \caption{$m$ on \mmlu-Sociology}
        \label{fig:res_sociology_n}
    \end{subfigure}\hfill 
        \begin{subfigure}{0.31\linewidth}
        \centering
        \includegraphics[width=\linewidth]{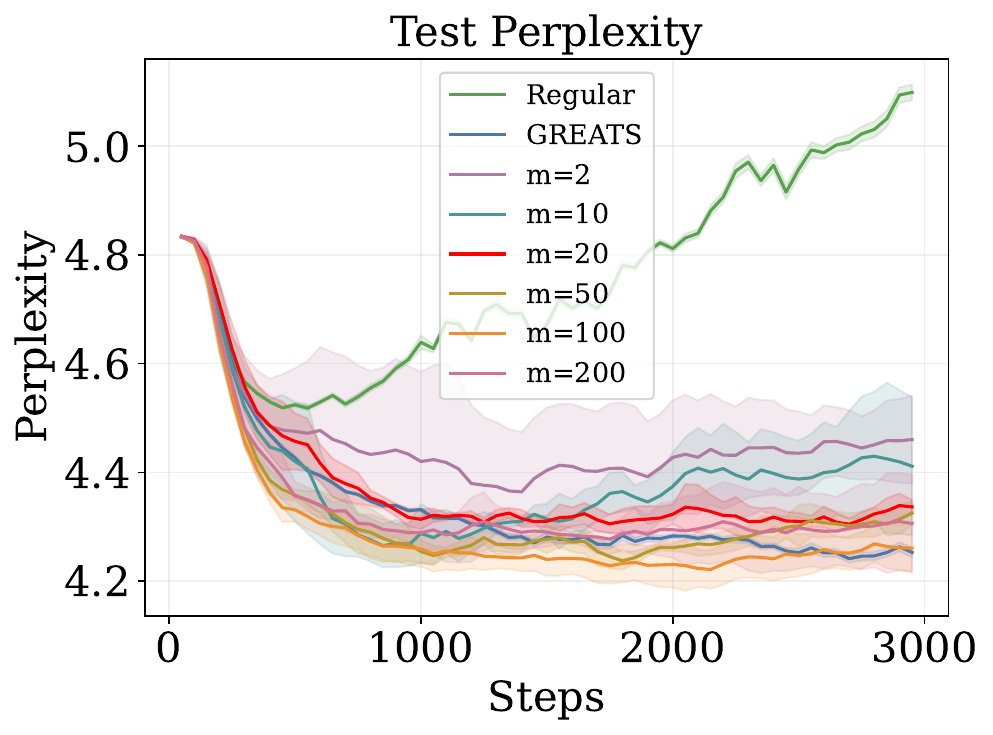}
        \caption{$m$ on \samsum}
        \label{fig:res_sociology_n}
    \end{subfigure}\hfill 
    \begin{subfigure}{0.30\linewidth}
        \centering
        \includegraphics[width=\linewidth]{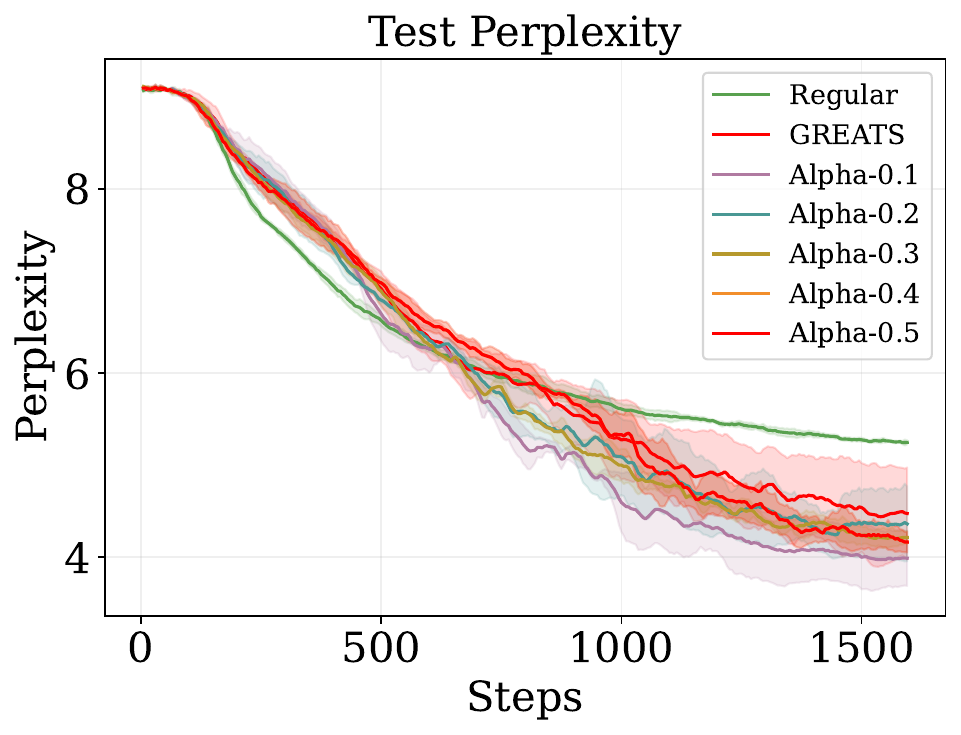}
        \caption{$\alpha$ on \mmlu-Sociology}
        \label{fig:res_samsum-n}
    \end{subfigure}
       \vspace{-3pt}
    \caption{Impact of strategy number on test perplexity for \mmlu-Sociology and \samsum and $\alpha$ for \mmlu-Sociology. We benchmark \gaia{}-\greats~against \textit{Regular} training and the \greats{} baseline. All results are averaged over three independent runs.}
    \label{fig:strategy_impact}
    \vspace{-4pt}
\end{figure}

\textbf{Effect of Strategy Pool Size.}
Results in Figure~\ref{fig:strategy_impact} show that the optimal strategy pool size $m$ varies with the size of the dataset. For the smaller \mmlu-Sociology, a compact pool ($m=20$) is optimal, as excessive strategies tend to dilute the utility signal toward uniform sampling (\textit{Regular} baseline). Conversely, the larger \samsum dataset demands greater strategic diversity; while $m \geq 50$ yields strong convergence, a constrained pool ($m < 50$) results in a noticeably higher perplexity floor due to insufficient utility coverage. Therefore, $m$ must be adaptively scaled: we should prioritize focused selection for small datasets and expand the ensemble for larger datasets, and more complex tasks.

\textbf{Impact of Sampling Temperature.}
The sampling temperature $\tau$ serves as a critical control for the exploration-exploitation trade-off within our GP sampling framework. Our analysis reveals a distinct tension: lower temperatures accelerate early-stage convergence via aggressive exploitation of high-utility samples, whereas higher temperatures ensure batch diversity but slow down the identification of informative data (See Appendix~\ref{app:adaptive_sensitivity} for details experiments). To reconcile these objectives, \gaia{}-\greats{} employs an \textbf{adaptive temperature} mechanism that dynamically transitions from focused exploitation to broader exploration. The necessity of this schedule is empirically confirmed by the \textbf{w/o TA} ablation results (Figure~\ref{fig:scoring}), where using a fixed temperature leads to significant training instability and mid-stage performance corruption. We provide a detailed discussion of the adaptive mechanism in Appendix~\ref{app:adaptive_mech}.

\textbf{Effect of Mixing Rate $\alpha$.} The fixed-share mixing rate $\alpha$ enforces a uniform floor on every strategy's probability, allowing the posterior to recover when a previously down-weighted strategy becomes more informative. Sweeping $\alpha \in \{0.1, 0.2, 0.3, 0.4, 0.5\}$ on \mmlu-Sociology with \llama{}, we find that $\alpha = 0.1$ achieves the lowest validation and test perplexity, and is among the highest test accuracies. This finding supports two complementary design choices. First, non-zero mixing is essential: it prevents the posterior from irreversibly committing to an early-favored strategy, empirically validating the fixed-share update over pure Hedge. Second, the optimal mixing is small, indicating that the GP-informed posterior signal should be largely preserved rather than diluted toward the uniform prior. A sharp posterior anchored to the Gaussian process does the heavy lifting of identifying high-utility regions, while a modest mixing floor maintains robustness under the non-stationarity of quality scores.
%


%
%

\section{Conclusion}

\textbf{Limitations.} Despite its effectiveness, our method has several limitations: first, its performance is tied to the fidelity of the base scoring function, acting as an amplifier rather than a corrective mechanism for suboptimal data utility signals; second, the method exhibits sensitivity to hyperparameters such as the strategy pool size $m$ and sampling temperature $\tau$, which require calibration based on dataset size to effectively balance exploration and exploitation; and finally, our current evaluation focuses exclusively on instruction tuning, leaving the application of global data valuation in large-scale pre-training as future work.

\textbf{Conclusion.} We have presented an adaptive data selection framework for LLM fine-tuning that moves beyond conventional batch-constrained approaches. By modeling global utility distributions with Gaussian Process regression and dynamically fusing multiple valuation strategies, our method ensures consistent delivery of high-quality training batches. Casting the strategy-posterior update as an instance of the classical fixed-share Hedge framework for tracking the best expert, \gaia{} inherits a dynamic-regret guarantee that characterizes its robustness under non-stationary quality scores. Empirical results across \mmlu{}, \tydiqa{}, and \samsum{} benchmarks demonstrate that our approach accelerates convergence and improves final task performance compared to state-of-the-art baselines. Beyond improving efficiency and accuracy, our framework provides a scalable, robust, and principled solution for data-efficient instruction tuning, highlighting the importance of intelligent data selection in large-scale model training.


\bibliography{references}{}
\bibliographystyle{unsrtnat}

\newpage
\appendix
\section{Appendix}

\subsection{Notation}\label{app:notation}

\begin{table}[h]
\centering
\caption{Summary of notation.}
\label{tab:notation}
\small
\begin{tabular}{@{}ll@{}}
\toprule
Symbol                            & Description                              \\
\midrule
$\dtra$,~$\dval$                  & Training and validation sets             \\
$\bz \triangleq (\bx,y)$                     & Training example (input, target)         \\
$\btheta_t$                       & Model parameters at iteration~$t$        \\
$l(\bz; \btheta_t)$               & Loss for example $\bz$ at iteration $t$  \\
$\bg_t(\bz)$                      & Gradient $\nabla l(\bz; \btheta_t)$      \\
$\tilde{\mclb}_t \subset \dtra$   & Candidate batch                          \\
$\mclb_t \subset \tilde{\mclb}_t$ & Refined training batch                   \\
$q(\bz)$                          & Observed quality score                   \\
 $\be(\bz) = \be(\bx)$    & Embedding of example $\bz$  \\
$s(\bz)$                          & Latent quality (GP sample)               \\
$\mcls$                           & Set of $m$ strategies                    \\
$T \in \mathbb{N}$                 & Total number of training iterations      \\
$\kappa$                          & GP kernel                                \\
$\lambda \in\mathbb{R}_+^*$                         & Hedge learning rate                      \\
$\alpha \in [0,1) $                        & Mixing rate (fixed-share)                \\
$\tau \in\mathbb{R}_+^*$                            & Sampling temperature                     \\
$\beta \in\mathbb{R}_+^*$                           & Likelihood temperature                   \\
$k \in\mathbb{R}_+^*$                               & Number of top positions in likelihood    \\
$\eta_t \in\mathbb{R}_+^*$                          & SGD learning rate                        \\
\bottomrule
\end{tabular}
\end{table}

\subsection{Gaussian Process Background}\label{app:gp}
\label{subsec:gp}

We model the latent quality function using a \emph{Gaussian process}~(GP)~\cite{williams2006gaussian}.
Let~$s: \dtra \to \mathbb{R}$ denote the GP prior over quality scores, where~$s(\bz)$ represents the intrinsic, long-term quality of example~$\bz$. As a simple modeling choice, we assume the observed quality score~$q(\bz)$ is a noisy realization of~$s(\bz)$:
\begin{align}
    q(\bz) = s(\bz) + \epsilon\ , \quad \epsilon \sim \mathcal{N}(0, \sigma_n^2)\ ,
    \label{eq:gausslik}
\end{align}
where~$\sigma_n^2$ denotes the observation noise variance. The noise term captures iteration-specific fluctuations: since~$q(\bz)$ depends on the current model parameters~$\btheta_t$, which evolve during training, the same example may yield different quality scores at different iterations. We revisit this likelihood model in Section~\ref{sec:approach} to better capture the relationship between observed scores and the latent quality function.

Under the GP prior,~$\{s(\bz)\}_{\bz \in \dtra}$ follows a multivariate Gaussian distribution fully specified by a prior mean and a covariance kernel~$\kappa_{\bz, \bz'} \triangleq \text{cov}(q(\bz), q(\bz'))$.
The kernel encodes the assumption that similar training examples have similar quality scores.
To evaluate $\kappa_{\bz, \bz'}$, we embed the input component $\bx$ of each example $\bz = (\bx, y)$ using a pre-trained model and apply random projection to obtain a low-dimensional representation $\be(\bz) \triangleq \be(\bx)$. Note that the embedding depends only on the input, not the label.
Following standard practice in GP modeling, we employ the \emph{squared exponential} (SE) kernel, though other kernels may also be used:
\begin{align}
    \kappa_{\bz, \bz'} \triangleq \sigma_s^2 \exp \left( - \frac{1}{2} \Big\|\frac{\be(\bz) - \be(\bz')}{\bl^2} \Big\|^2 \right)\ ,
\end{align}
where~$\sigma_s$ and~$\bl$ denote the signal variance and lengthscales, respectively. These hyperparameters can be estimated directly from the data by maximizing the GP log marginal likelihood \cite{williams2006gaussian,artemev2021tighter}.
Let~$\mcld_t \triangleq \{(\bz, q(\bz))\}_{\bz \in \bigcup_{t'=1}^{t-1} \mclb_{t'}}$ denote the set of observed quality scores from previous iterations. Given~$\mcld_t$, the posterior distribution of~$s(\bz_*)$ for any~$\bz_* \in \dtra$ is Gaussian with closed-form mean~$\mu_t(\bz_*)$ and variance~$\sigma_t^2(\bz_*)$~\cite{williams2006gaussian}.
However, exact GP inference requires~$\mathcal{O}(n^3)$ time, where~$n$ is the number of observations, rendering it impractical for per-iteration evaluation. Even sparse GP approximations with~$n_{\text{ind}}$ inducing points require~$\mathcal{O}(n n_{\text{ind}}^2)$ time. Our approach (Section~\ref{sec:approach}) addresses this computational bottleneck while preserving the expressiveness of the GP framework for online data selection.

\subsection{Dynamic-Regret Guarantee via Fixed-Share Hedge}\label{app:regret}

This appendix supports Theorem~\ref{thm:regret-main} of the main text. We state the posterior update of Section~\ref{sec:approach} as an instance of the \emph{fixed-share Hedge} algorithm of \citet{herbster1998tracking} and instantiate the standard fixed-share Hedge regret bound for our setting. We condition on the set of strategies $\mcls$ constructed at initialization by paired sampling from $\text{GP}(0, \kappa)$ (see Section~\ref{sec:approach}) and analyze the regret of the posterior update over these fixed strategies, which is the component of the algorithm that is updated online. The GP sampling step is a one-time offline operation whose analysis is orthogonal to the regret argument.

\paragraph{Setup.} Let $\mcls$ be the finite strategy set of size $m$ constructed in Section~\ref{sec:approach} by paired sampling from $\text{GP}(0, \kappa)$, and let $T$ denote the total number of training iterations. At iteration $t \in \{1, \ldots, T\}$, the algorithm holds a distribution $p_t$ over $\mcls$ (defined as $p_t(s) \triangleq p(s \mid \mcld_{t-1})$, the posterior from Section~\ref{sec:approach}), samples a strategy, uses it to construct the candidate batch, and observes the per-strategy loss
\begin{align}
    \ell_t(s) \triangleq -\log p_{\text{top-}k}\bigl(q(\mclb_t) \mid s\bigr)\ .
    \label{eq:loss}
\end{align}
Note that $\ell_t(s)$ can be computed for every $s \in \mcls$ from the pre-computed values $\{s(\bz)\}_{\bz \in \mclb_t}$, so the algorithm operates in the \emph{full-information} setting rather than the bandit setting. Because $\mclb_t$ is itself generated using the sampled strategy $s_t \sim p_t$, the loss function $\ell_t$ is a random object determined by the algorithm's own trajectory; Theorem~\ref{thm:regret} therefore bounds regret along this realized trajectory rather than against a counterfactual trajectory driven by the comparator.

\paragraph{Loss boundedness.} Let $b \triangleq |\mclb_t|$. For a fixed set of strategies $\mcls$ sampled at initialization, define
\begin{align}
    S \triangleq \sup_{s \in \mcls,\, \bz \in \dtra} |s(\bz)|\ ,
\end{align}
which is a finite deterministic quantity that can be computed once after the strategies are drawn. We bound $\ell_t(s)$ by direct calculation. Let $a_i \triangleq s(\bz_{(i)}) / \beta$. Then
\begin{align}
    \ell_t(s) = \sum_{j=1}^k \log \sum_{i=1}^b e^{a_i - a_j}\ .
\end{align}
The lower bound $\ell_t(s) \geq 0$ follows because the inner sum includes the term $i = j$, which equals $1$, so each log term is non-negative. The upper bound uses $a_i - a_j \in [-2 S / \beta, 2 S / \beta]$, giving $\sum_i e^{a_i - a_j} \leq b \cdot e^{2 S / \beta}$ and therefore $\ell_t(s) \leq L$ with
\begin{align}
    L \triangleq k \bigl( \log b + 2 S / \beta \bigr)\ .
    \label{eq:L}
\end{align}

\paragraph{Connection to fixed-share Hedge.} We show that the posterior update of Section~\ref{sec:approach} is a particular instance of fixed-share Hedge.

\begin{proposition}[Equivalence to fixed-share Hedge]
\label{prop:equiv}
The two-step posterior update of Section~\ref{sec:approach},
\begin{align}
    \hat{p}(s | \mcld_t) &\propto p(s | \mcld_{t-1})\, p_{\text{top-}k}\bigl(q(\mclb_t) \mid s\bigr)^\lambda\ ,\\
    p(s | \mcld_t) &= \alpha / m + (1 - \alpha)\, \hat{p}(s | \mcld_t)\ ,
\end{align}
where $\hat{p}(s | \mcld_t)$ denotes the intermediate post-Hedge distribution and $p(s | \mcld_t)$ is the final posterior carried into the next iteration, is equivalent to the fixed-share Hedge algorithm of \citet{herbster1998tracking} with learning rate $\lambda$ and mixing rate $\alpha$, applied to the loss sequence $\ell_t(s) = -\log p_{\text{top-}k}(q(\mclb_t) \mid s)$.
\end{proposition}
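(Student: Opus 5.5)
The proof is a direct unfolding of definitions. First we recall the fixed-share Hedge algorithm of \citet{herbster1998tracking} in its standard form, as given in \citet{cesa2006prediction}. It maintains weights $w_t(s)$ over the $m$ experts and starts from $w_1(s)=1/m$. Each round has two steps. The loss update is $v_t(s) = w_t(s)e^{-\lambda \ell_t(s)}/\sum_{s'} w_t(s')e^{-\lambda \ell_t(s')}$. The share update is $w_{t+1}(s) = \alpha/m + (1-\alpha) v_t(s)$. This is one common variant; another writes the share step as $\alpha/(m-1)\sum_{s'\neq s}v_t(s') + (1-\alpha)v_t(s)$. If we instead target the $(m-1)$ variant, we reparametrize $\alpha$ by the factor $m/(m-1)$ and note that the regret bound is insensitive to this change. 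We commit to the $\alpha/m$ form, which is exactly the one analyzed in \citet{cesa2006prediction}.

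Next we match the two steps term by term. The initializations agree, since $p(s\mid\mcld_0)=1/|\mcls|=1/m$. For the Hedge step, substitute $\ell_t(s) = -\log p_{\text{top-}k}(q(\mclb_t)\mid s)$. This gives
\begin{align}
p_{\text{top-}k}\bigl(q(\mclb_t)\mid s\bigr)^\lambda = \exp\bigl(\lambda \log p_{\text{top-}k}(q(\mclb_t)\mid s)\bigr) = e^{-\lambda \ell_t(s)}\ .
\end{align}
Hence $\hat p(s\mid\mcld_t)\propto p(s\mid\mcld_{t-1})e^{-\lambda\ell_t(s)}$. Both sides are normalized over the finite set $\mcls$, and the normalizer is strictly positive. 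Positivity holds because each top-$k$ likelihood is a product of softmax probabilities and is therefore positive; the boundedness $0\le \ell_t(s)\le L$ from \eqref{eq:L} also guarantees finiteness. So $\hat p(\cdot\mid\mcld_t)=v_t$ exactly, provided $p(\cdot\mid\mcld_{t-1})=w_t$. The share steps are literally identical formulas. An induction on $t$ then gives $p(\cdot\mid\mcld_{t-1}) = w_t = p_t$ for all $t$.

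The one point to handle carefully is that $\ell_t$ depends on $\mclb_t$, which in turn depends on the algorithm's own sampled strategy. Fixed-share Hedge, however, is defined for arbitrary (even adaptively chosen) loss sequences. The equivalence is therefore a pathwise identity: it holds for every realized sequence $(\ell_1,\dots,\ell_T)$, and no stochastic assumption is needed. We also check that $\ell_t(s)$ is computable for every $s$ from the pre-computed values $\{s(\bz)\}_{\bz\in\mclb_t}$, as the paper notes. This confirms the full-information setting required by fixed-share Hedge, and it is the only genuinely nontrivial condition in the argument.
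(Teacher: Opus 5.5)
Your proposal is correct and follows the same route as the paper's proof: substituting $p_{\text{top-}k}(q(\mathcal{B}_t)\mid s)=e^{-\ell_t(s)}$ turns the first step into the exponential-weights update with rate $\lambda$, and the second step is read off directly as the uniform share step with rate $\alpha$. Your extra bookkeeping makes explicit what the paper's two-line argument leaves implicit: matching the uniform initialization, positivity of the normalizer, the induction on $t$, and the observation that Herbster--Warmuth's original share step uses $\alpha/(m-1)$, so the literal match is with the $\alpha/m$ form in Cesa-Bianchi--Lugosi, up to rescaling the mixing rate by $m/(m-1)$.
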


\begin{proof}
Substituting $p_{\text{top-}k}(q(\mclb_t) \mid s) = \exp(-\ell_t(s))$ into the first step gives $\hat{p}(s | \mcld_t) \propto p(s | \mcld_{t-1})\, \exp(-\lambda \ell_t(s))$, which is the exponential-weights (Hedge) update with learning rate $\lambda$ on the loss $\ell_t$. The second step is the uniform mixing step of fixed-share Hedge with mixing rate $\alpha$ and uniform prior $1/m$.
\end{proof}

\paragraph{Dynamic-regret comparator.} To capture non-stationarity of the optimal strategy, we compare the algorithm's cumulative loss to that of an arbitrary comparator sequence $\bs^* = (s_1^*, \ldots, s_T^*) \in \mcls^T$ that is allowed to switch up to $K$ times, i.e.\ $|\{t : s_{t+1}^* \neq s_t^*\}| \leq K$. The dynamic regret against $\bs^*$ is
\begin{align}
    \mathrm{Reg}_T(\bs^*) \triangleq \sum_{t=1}^T \mathbb{E}_{s \sim p_t}[\ell_t(s)] - \sum_{t=1}^T \ell_t(s_t^*)\ .
\end{align}

\paragraph{Statement of the bound.}

\begin{theorem}[Dynamic regret bound, instantiation of \citealp{herbster1998tracking} for \gaia{}]
\label{thm:regret}
Fix the set of strategies $\mcls$ sampled at initialization, which determines $S$ and hence $L$. Let $\bs^* \in \mcls^T$ be any comparator sequence with at most $K$ switches. Running the posterior update of Section~\ref{sec:approach} with learning rate
\begin{align}
    \lambda^* = \sqrt{\frac{8 \bigl( K \log m + K \log(T / K) + \log m \bigr)}{L^2 T}}
\end{align}
and mixing rate $\alpha^* = K / (T - 1)$ yields
\begin{align}
    \mathrm{Reg}_T(\bs^*) \leq L \sqrt{\frac{T}{2} \bigl( K \log m + K \log(T / K) + \log m \bigr)}\ ,
    \label{eq:regret}
\end{align}
where $L = k (\log b + 2 S / \beta)$ is the loss bound from Equation~\eqref{eq:L}.
\end{theorem}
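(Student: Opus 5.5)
The plan is to run the standard fixed-share potential argument of \citet{herbster1998tracking} and \citet{cesa2006prediction}, specialised to our loss via Proposition~\ref{prop:equiv} and the boundedness $\ell_t(s)\in[0,L]$ from Equation~\eqref{eq:L}.

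\textbf{Step 1: per-round bound.} By Proposition~\ref{prop:equiv}, $p_{t+1}$ is obtained from $p_t$ by the exponential-weights step followed by a Markov transition. That transition is
\[
P(s'\to s)=\alpha/m+(1-\alpha)\mathbb{1}[s=s'] .
\]
Since $\ell_t\in[0,L]$, Hoeffding's lemma gives
\[
\log \mathbb{E}_{s\sim p_t}\bigl[e^{-\lambda\ell_t(s)}\bigr]\le -\lambda\,\mathbb{E}_{s\sim p_t}[\ell_t(s)]+\lambda^2L^2/8 .
\]

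\textbf{Step 2: reformulate as a mixture over paths.} I will view the algorithm as Bayesian filtering over strategy paths $(s_1,\dots,s_T)\in\mcls^T$. The prior on paths is $\pi(s_1,\dots,s_T)=\frac1m\prod_{t}P(s_t\to s_{t+1})$, and each path receives likelihood $\prod_t e^{-\lambda\ell_t(s_t)}$. Writing $Z_T$ for the total mixture weight, the telescoping identity is
\[
\log Z_T=\sum_t\log \mathbb{E}_{p_t}\bigl[e^{-\lambda\ell_t}\bigr].
\]
Combining this with Step 1 yields
\[
\log Z_T\le -\lambda\sum_t\mathbb{E}_{p_t}[\ell_t]+\lambda^2L^2T/8 .
\]
This is the one place where I must verify carefully that the marginal of the path posterior equals $p_t$. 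That holds because mixing after the update is exactly one step of the Markov chain.

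\textbf{Step 3: lower bound via the comparator path.} I lower-bound $Z_T$ by the contribution of the single path $\bs^*$ alone. Along that path, each switch costs at least $\alpha/m$ and each stay costs at least $1-\alpha$, so
\[
\log Z_T\ge -\log m-K\log(m/\alpha)-(T-1-K)\log\tfrac{1}{1-\alpha}-\lambda\sum_t\ell_t(s_t^*) .
\]
Combining with Step 2 gives
\[
\mathrm{Reg}_T\le \frac{C_\alpha}{\lambda}+\frac{\lambda L^2T}{8},
\]
where
\[
C_\alpha=\log m+K\log m+K\log\tfrac1\alpha+(T-1-K)\log\tfrac1{1-\alpha}.
\]

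\textbf{Step 4: tune $\alpha$ and $\lambda$.} Setting $\alpha^*=K/(T-1)$ makes the last two terms of $C_\alpha$ equal $(T-1)H(K/(T-1))$, the binary entropy, which is at most $K\log((T-1)/K)+K$. Taking $\lambda^*=\sqrt{8C/(L^2T)}$ then gives the bound $2\sqrt{CL^2T/8}=L\sqrt{TC/2}$, which is Equation~\eqref{eq:regret} with $C=K\log m+K\log(T/K)+\log m$.

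\textbf{Main obstacle: the leftover $+K$.} The entropy bound produces an extra additive $+K$ (from $(T-1-K)\log(1+K/(T-1-K))\le K$) that is not in the stated constant. I would first try absorbing it using $T-1<T$. If that slack is insufficient, I would use $\log m\ge 1$ for $m\ge 3$ to fold it into the $K\log m$ term, at the price of a constant factor. Otherwise the stated bound should be read with $K\log(eT/K)$ in place of $K\log(T/K)$, which does not change the $\mathcal{O}(\cdot)$ statement of Theorem~\ref{thm:regret-main}.

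\textbf{Degenerate case $K=0$.} Here $\alpha^*=0$ and the terms $K\log(T/K)$ are interpreted as $0$. The argument then reduces to plain Hedge with $C=\log m$, matching the static-regret remark in Section~\ref{sec:regret}.
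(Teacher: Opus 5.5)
Your Steps 1--3 are the standard fixed-share potential argument: Hoeffding per round, forward filtering over switching paths, and a lower bound from the comparator path alone. The paper's proof does exactly this, by citation, after Proposition~\ref{prop:equiv} and $\ell_t\in[0,L]$. Those steps are correct. One small fix: for a comparator with $K'<K$ switches, lower-bound $K-K'$ of the stay transitions $1-\alpha+\alpha/m$ by $\alpha/m$ to reach the same expression. So your route is the paper's, and the only issue is the constant in Step 4.

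That obstacle is genuine. It is a defect in the stated theorem, not in your route. The paper's proof asserts that the cited analysis ``instantiated with the stated choices'' gives the displayed bound, and it does not.

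With $\alpha^*=K/(T-1)$ you get $C_{\alpha^*}-C=(T-1-K)\log\frac{T-1}{T-1-K}-K\log\frac{T}{T-1}\ge K\frac{T-2-K}{T-1}$, which is essentially $K$. The slack from $T-1<T$ is only $K\log\frac{T}{T-1}\le \frac{K}{T-1}$. So your first fix cannot work: for $K=1$, $T=100$, the leftover is about $0.99$ and the slack about $0.01$.

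Because $\lambda^*$ is tuned to the smaller $C$, the analysis gives $\mathrm{Reg}_T\le L\sqrt{T/8}\,\bigl(C_{\alpha^*}/\sqrt{C}+\sqrt{C}\bigr)$. This is strictly above $L\sqrt{TC/2}$ whenever $C_{\alpha^*}>C$.

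What this argument actually proves is the bound with $C'=(K+1)\log m+K\log\frac{e(T-1)}{K}$ and $\lambda=\sqrt{8C'/(L^2T)}$, namely $\mathrm{Reg}_T\le L\sqrt{TC'/2}$. This matches the usual tuned fixed-share corollary in \citet{cesa2006prediction}. Alternatively, keep the paper's $\lambda^*$ and use $C_{\alpha^*}\le C+K\le 2C$ for $m\ge 3$; this gives $\tfrac32 L\sqrt{TC/2}$.

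Either version supports the $\mathcal{O}(\cdot)$ claim of Theorem~\ref{thm:regret-main}. But you should present the result with the corrected constant, not as a proof of the displayed inequality verbatim.
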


\begin{proof}
By Proposition~\ref{prop:equiv}, the posterior update of Section~\ref{sec:approach} is fixed-share Hedge applied to the loss sequence $(\ell_t)_{t=1}^T$. The losses satisfy $\ell_t(s) \in [0, L]$ per Equation~\eqref{eq:L}, and the algorithm operates in the full-information setting because $\ell_t(s)$ is computable for every $s \in \mcls$ from $\{s(\bz)\}_{\bz \in \mclb_t}$. The bound then follows from the standard dynamic-regret analysis for fixed-share Hedge with bounded losses (see \citet{herbster1998tracking} for the algorithm and \citet{cesa2006prediction} for the $[0,L]$-loss analysis via Hoeffding-type exponential-weights bounds), instantiated with the stated choices of $\lambda^*$ and $\alpha^*$.
\end{proof}

\subsection{Training Tasks}\label{app:tasks}

We follow the experimental setting in \greats, reported our method on three instruction finetuning tasks:

\begin{itemize}
    \item \textbf{\mmlu} Finetuning \llama and \qwen on 5\% \less data, and evaluate on  \mmlu, we target the query, key, value, and output projections, setting the LoRA rank to 128, LoRA $\alpha$ to 1.0, and dropout to 0.1. The learning rate is set to 2e-5.
    \item \textbf{\samsum} Finetuning \llamat and \qwen on \alpaca dataset, and evaluate on \samsum, we target the the query, value, and output projections within the self-attention layers, as well as the gate, up, and down projections of the feed-forward networks, setting the LoRA rank to 8, LoRA $\alpha$ to 16.0, and dropout to 0.1. The learning rate is set to 2e-5.
    \item \textbf{\tydiqa} Finetuning \mistral on 40\% \less data, and evaluate on \tydiqa, we target the query and key attention projections, setting the LoRA rank to 128, LoRA $\alpha$ to 1.0, and dropout to 0.1. The learning rate is set to 1e-5.
\end{itemize}

\textbf{Training Dataset}
There are two primary instruction tuning datasets are involved in our experiments: \less~\cite{xia2024less} and \alpaca~\cite{alpaca}.
\begin{itemize}
    \item \textbf{\less:} A composite dataset combining four instruction tuning sources: \flan~\cite{flan}, \datacot~\cite{cot}, \dolly~\cite{dolly}, and \openassistant~\cite{kopf2023openassistant}.
    \item \textbf{\alpaca:} An instruction-following dataset comprising 52k examples, covering a diverse range of user-oriented tasks.
\end{itemize}

\textbf{Evaluation Dataset}
To ensure a comprehensive evaluation, we assess performance on three distinct benchmarks: \mmlu, \tydiqa, and \samsum. We report perplexity for all tasks

\begin{itemize}
    \item \textbf{\mmlu} consists of multiple-choice questions across 57 subjects. While our main experiments focus on the Sociology subject, we report accuracy across 5 different subjects to demonstrate robustness.
    \item \textbf{\tydiqa} is a multilingual question-answering benchmark covering 11 diverse languages, where the objective is to extract an answer from a provided passage. For evaluation, we randomly select a held-out set of 500 test samples.
    \item \textbf{\samsum} involves summarizing dialogues. For this task, we report Perplexity, and ROUGE scores to measure generation quality.
\end{itemize}

\subsection{Implementation Details}\label{app:implementation}

\paragraph{GP Training and Hyperparameters}
Our Gaussian Process model is implemented using the \texttt{GPyTorch} library \cite{gardner2018gpytorch}. We optimize the kernel hyperparameters---specifically the signal variance $\sigma_s$ and lengthscales $\bl$---by maximizing the marginal log-likelihood (MLL) of the warm-up observations.

The GP training involves $200$ iterations with an initial noise standard deviation of $0.1$ and a Hedge learning rate of $\lambda = 0.8$. For the adaptive temperature schedule, we update the temperature every $100$ training steps with an increment of $0.01$ per interval. The semantic embeddings are projected into a $10$-dimensional subspace using SVD to ensure efficient posterior sampling.

\paragraph{Task-Specific Configurations}
We provide the detailed strategy pool sizes and temperature settings for each benchmark in Table~\ref{tab:hyperparams}.

\begin{table}[h]
\centering
\caption{Hyperparameter settings for \gaia{} across different tasks.}
\label{tab:hyperparams}
\begin{tabular}{lccc}
\toprule
\textbf{Task} & \textbf{Strategy Pool Size ($m$)} & \textbf{Sampling Temp ($\tau$)} & \textbf{Likelihood Temp ($\beta$)} \\
\midrule
MMLU    & 20  & 0.2 & 0.1  \\
Samsum  & 100 & 0.1 & 0.01 \\
TyDiQA  & 150 & 0.1 & 0.01 \\
\bottomrule
\end{tabular}
\end{table}

\subsection{Baselines and Scoring Functions}\label{app:baselines}
We compare \gaia{} against several representative data selection and valuation methods:

\begin{itemize}[nosep]
    \item \textbf{\greats:}~\cite{wang2024greats} An efficient online batch selection method that employs a greedy algorithm to optimize data batch quality, approximated via Taylor expansion.
    \item \textbf{\gradnorm:}~\cite{gradnorm} A selection strategy that prioritizes training data points exhibiting the highest gradient norms.
    \item \textbf{\maxloss:}~\cite{maxloss} A strategy that selects training data points with the highest loss values, focusing on "hard" examples.
    \item \textbf{\sbert:}~\cite{wang2024greats} A semantic selection method that curates training batches based on their embedding similarity to the validation set, utilizing Sentence-BERT~\cite{reimers-2019-sentence-bert}.
    \item \textbf{\rholoss:}~\cite{rholoss} A selection strategy that selects data points by maximizing the difference between the target model's loss and a reference model's loss, thereby filtering out noise while prioritizing learnable examples.
\end{itemize}

\subsection{The Other Surrogate Models: Why Gaussian Processes?}

To demonstrate the superiority of Gaussian Processes for guiding data sampling, Figure~\ref{fig:surrogate} evaluates the downstream performance of alternative sampling pipelines driven by standard parametric models (Linear Regression, MLP, and Transformer). We trained these alternative surrogates on the 500 warm-up samples to predict global utility scores, followed by data selection using either a deterministic curriculum (Curric) or probabilistic (Prob) sampling strategy. The results clearly show that these alternative sampling methods fall short. \gaia{} significantly and consistently outperforms all baseline sampling configurations across all downstream metrics, achieving substantially lower validation and test perplexity, as well as higher test accuracy. This confirms that our GP-based framework provides a much more robust and effective utility signal for data selection, translating directly to superior downstream task performance
\begin{figure*}
    \centering
    \includegraphics[width=\linewidth]{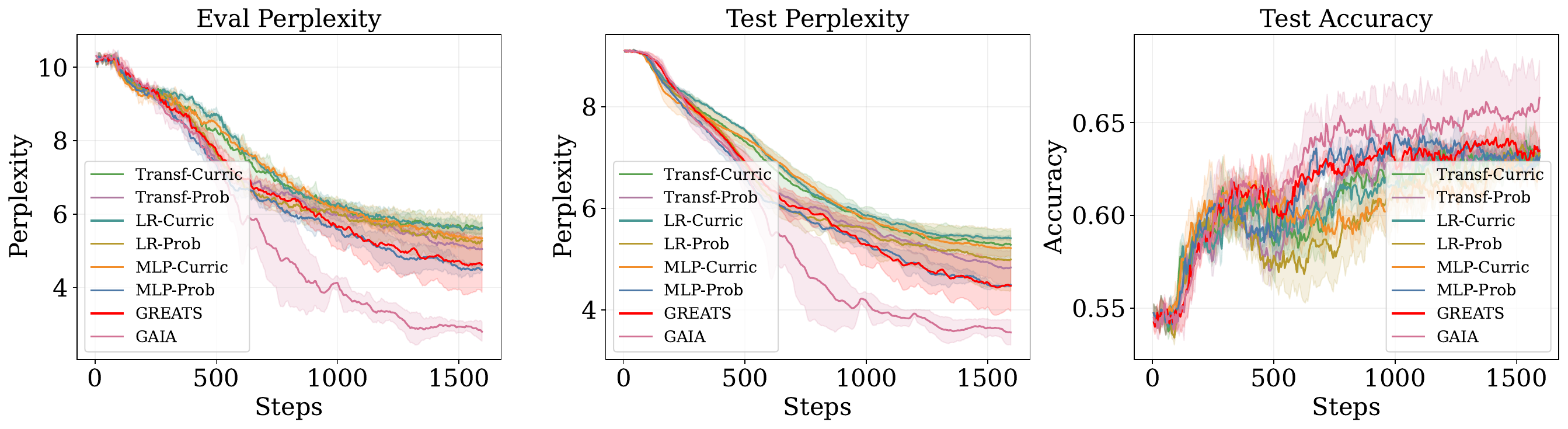}
    \caption{Training dynamics on \mmlu-Sociology using the \llama backbone, comparing downstream performance when using alternative surrogate models for data sampling.}

    \label{fig:surrogate}
\end{figure*}

\subsection{\mmlu-Others}
As shown in Table \ref{tab:results_ppl}, \gaia achieves the lowest average perplexity (5.9201), significantly outperforming the baseline (7.8854) and \greats{} (6.6981). These results confirm that \gaia’s data selection strategy more effectively captures specialized knowledge than existing approaches.

\begin{table}[t]
\centering
\caption{Comparison of Test Perplexity on MMLU subsets. \textbf{AVG} denotes the mean performance across all five subjects. Bold values indicate the best performance (lower is better).}
\label{tab:results_ppl}
\small
\begin{tabular}{lcccccc}
\toprule
\textbf{Method} & \textbf{Anat.} & \textbf{Astro.} & \textbf{B. Ethics} & \textbf{Clin. Know.} & \textbf{Abs. Alg.} & \textbf{AVG} \\ 
\midrule
Regular & 7.4797 & 7.7748 & 8.0511 & 7.5611 & 8.5603 & 7.8854 \\
\greats{}  & 7.2278 & \textbf{6.1785} & 6.5014 & 6.1407 & 7.4420 & 6.6981 \\
\gaia{}    & \textbf{5.8763} & 6.2896 & \textbf{5.5351} & \textbf{5.2791} & \textbf{6.6205} & \textbf{5.9201} \\ 
\bottomrule
\end{tabular}
\end{table}

\subsection{Performance on \tydiqa{}}\label{app:tydiqa}
\begin{figure*}
    \centering
    \includegraphics[width=\linewidth]{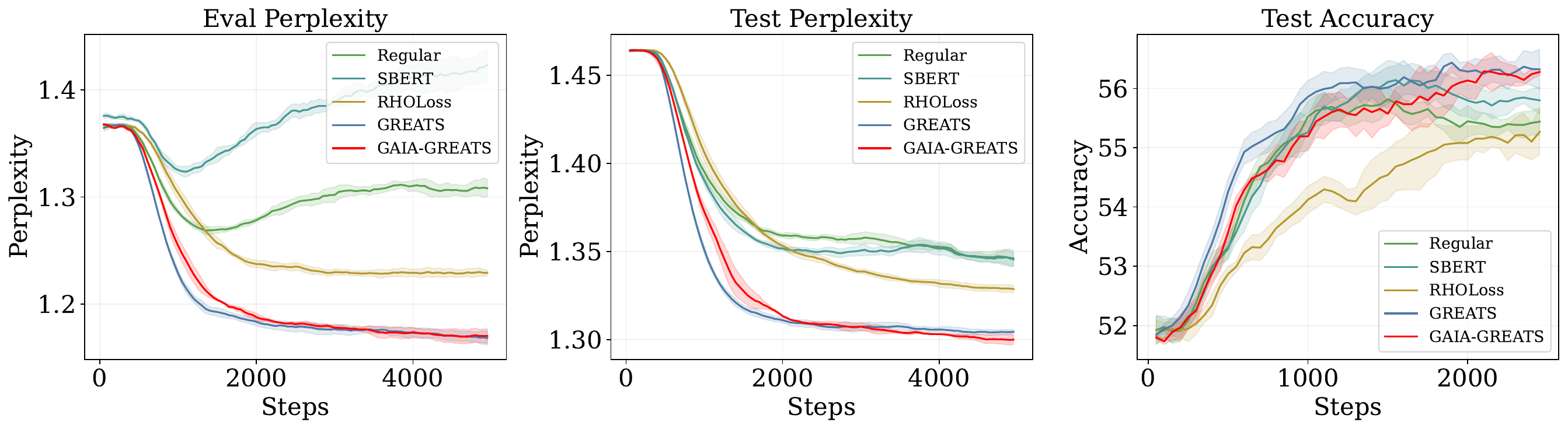}
    \caption{Training dynamics on the \tydiqa{} benchmark. Our method achieves better perplexity on the test set. All results are averaged over three independent runs.}

    \label{fig:tydiqa}
\end{figure*}

Figure~\ref{fig:tydiqa} illustrates the convergence and downstream performance on the \tydiqa dataset. Although \textbf{GP-\greats{}} exhibits a slightly more gradual decline in evaluation and test perplexity during the initial training phase, it eventually achieves a superior convergence floor compared to \greats{}. Specifically, the final perplexity levels (both eval and test) of our method are marginally lower than those of the strongest baseline. Furthermore, in terms of downstream task performance, our method maintains an F1 score that is on par with \greats{}. This suggests that while our global selection strategy may take longer to identify optimal data patterns on certain datasets, it ultimately reaches a more refined state of model alignment, demonstrating competitive generalization capabilities across diverse linguistic tasks.

\begin{figure*}[t]
    \centering
    \begin{subfigure}{\linewidth}
        \centering
        \includegraphics[width=0.95\linewidth]{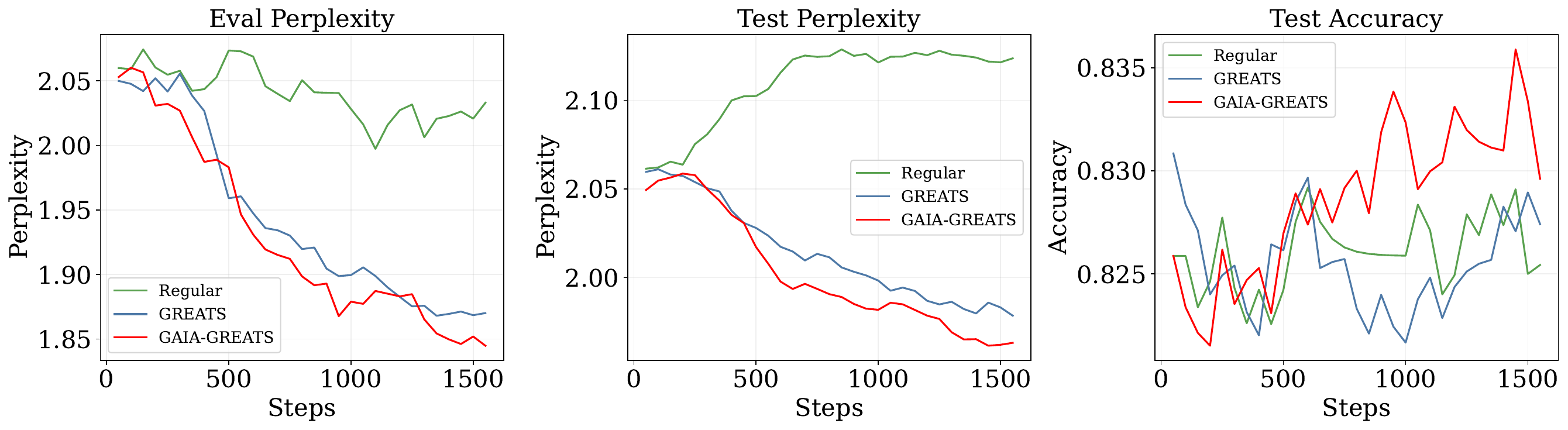}
        \caption{MMLU - Sociology: Validation Perplexity, Test Perplexity, and Accuracy (from left to right)}
        \label{fig:qwen-soc}
    \end{subfigure}

    \vspace{0.2em}

    \begin{subfigure}{\linewidth}
        \centering
        \includegraphics[width=0.95\linewidth]{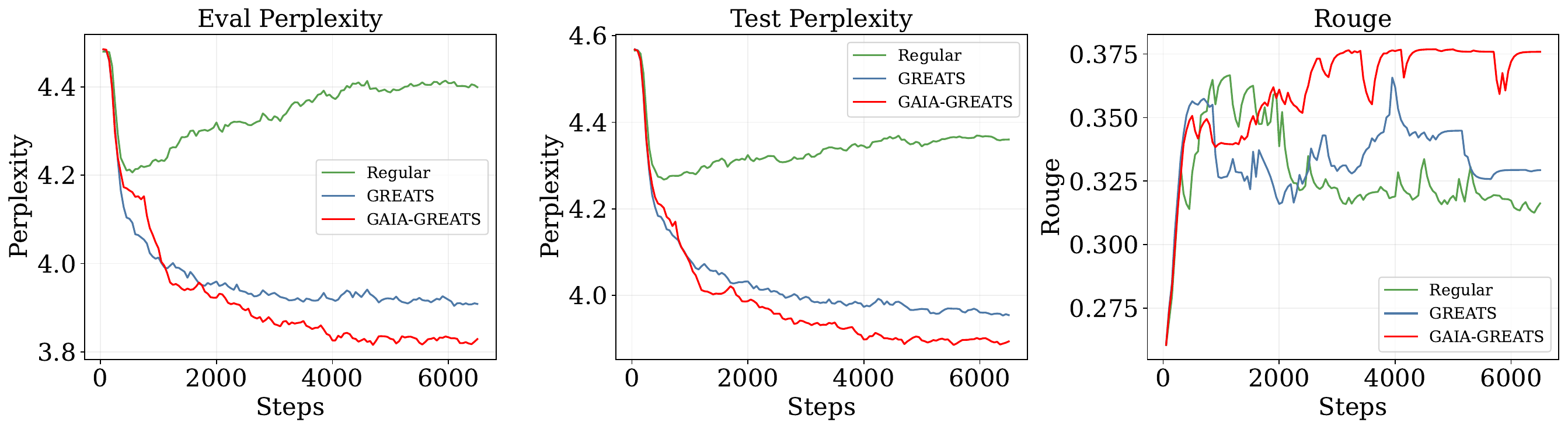}
        \caption{SAMSum: Validation Perplexity, Test Perplexity, and Rouge (from left to right)}
        \label{fig:qwen-sam}
    \end{subfigure}

    \caption{Training dynamics on \mmlu-Sociology and \samsum using the Qwen3-4B backbone. We compare \gaia{}-\greats against Regular training and the GREATS baseline to evaluate cross-architecture generalization.}
    \label{fig:qwen}
    \vspace{-15pt}
\end{figure*}
\subsection{Qwen model}\label{app:qwen}

To verify that our framework is architecture-agnostic, we extend our evaluation to the Qwen model family. As illustrated in Figure~\ref{fig:qwen}, \gaia{}-\greats{} consistently maintains its performance superiority over both Regular training and the strong \greats{} baseline. On the \mmlu-Sociology task, \gaia{} achieves a noticeably faster reduction in both evaluation and test perplexity, which directly translates to a higher and more stable downstream accuracy. Similarly, on the \samsum summarization task, \gaia{} exhibits a sharper initial descent in perplexity and secures a higher ROUGE score during the early training stages. These results confirm that the global utility manifold modeled by our Gaussian Process effectively captures intrinsic data quality independently of the underlying LLM architecture, ensuring robust generalization across different foundation models.

\subsection{Performance on Additional \mmlu Subjects}\label{app:mmlu_addition}

\begin{figure*}[t]
    \centering
    \begin{subfigure}{0.48\textwidth}
        \centering
        \includegraphics[width=\linewidth]{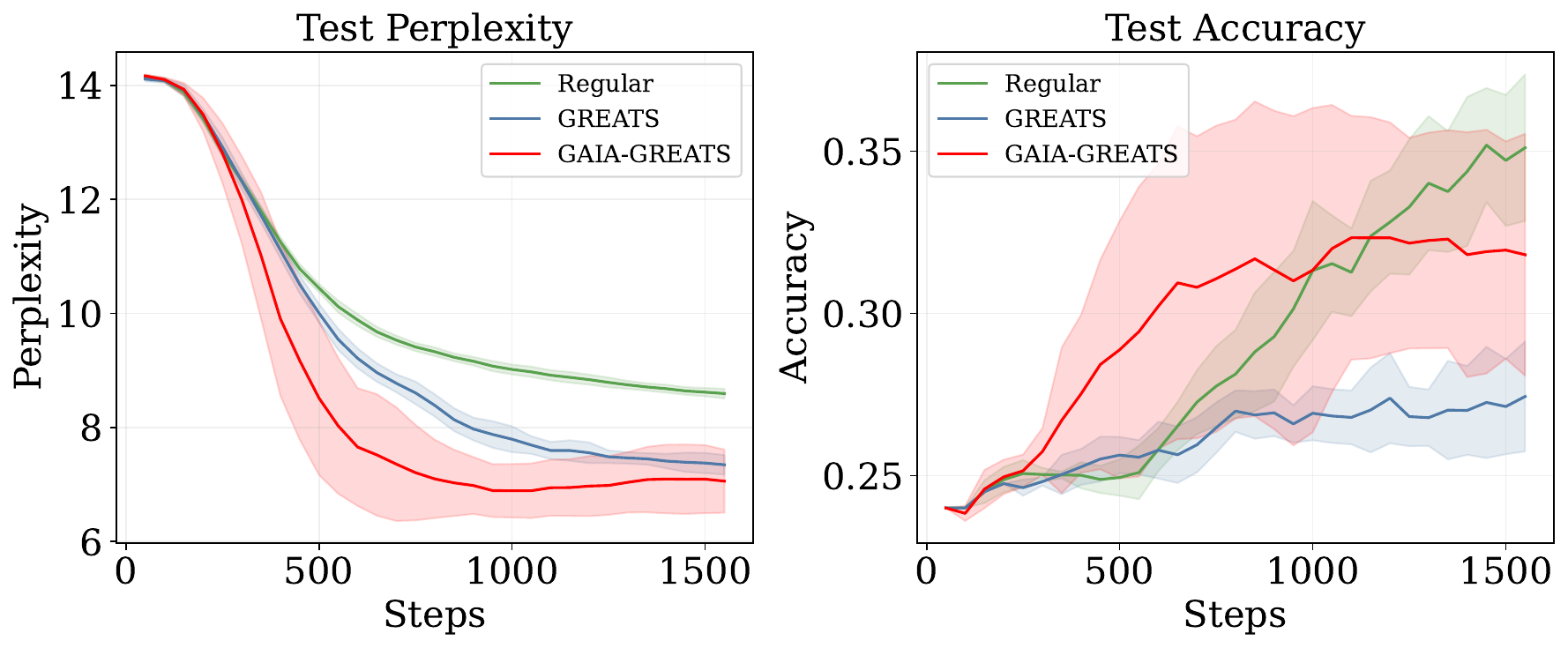}
        \caption{Abstract Algebra}
        \label{fig:res_algebra}
    \end{subfigure}
    \hfill
    \begin{subfigure}{0.48\textwidth}
        \centering
        \includegraphics[width=\linewidth]{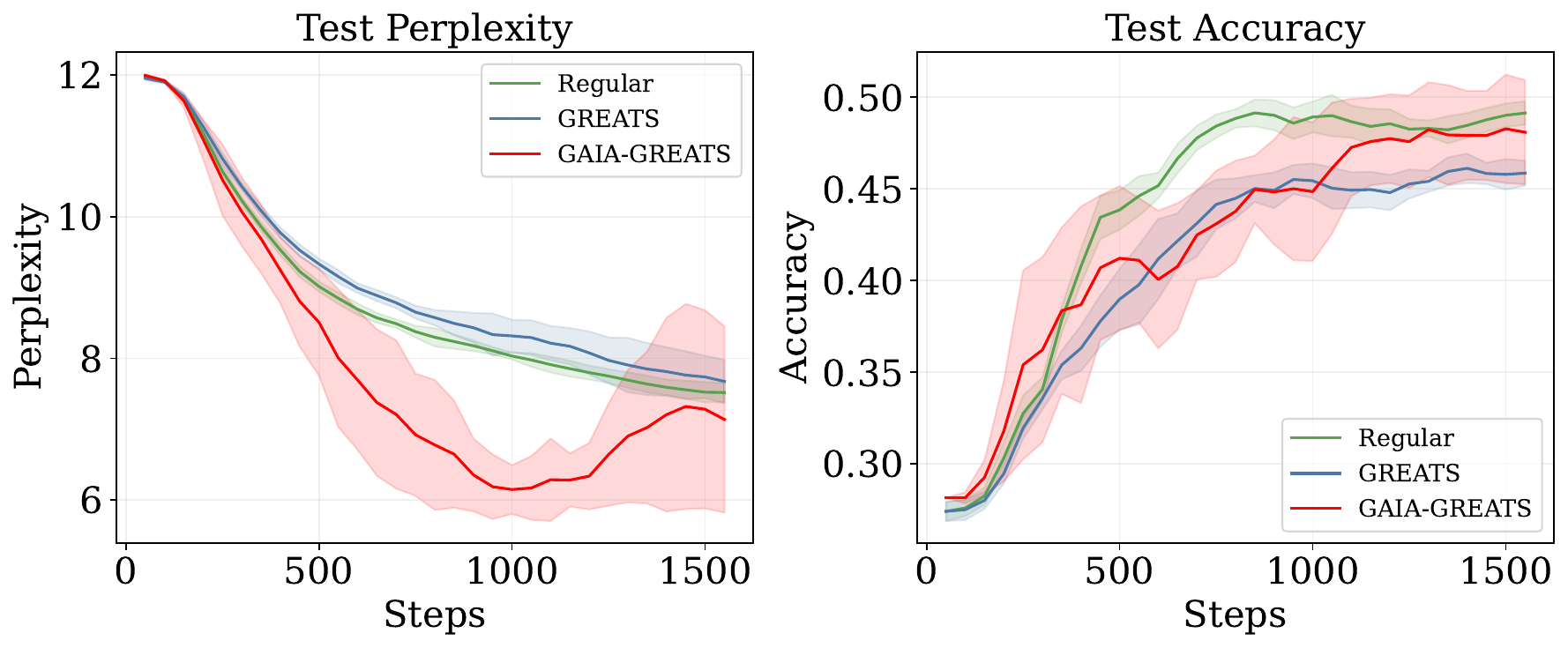}
        \caption{Anatomy}
        \label{fig:res_anatomy}
    \end{subfigure}

    \vspace{10pt} 

    \begin{subfigure}{0.48\textwidth}
        \centering
        \includegraphics[width=\linewidth]{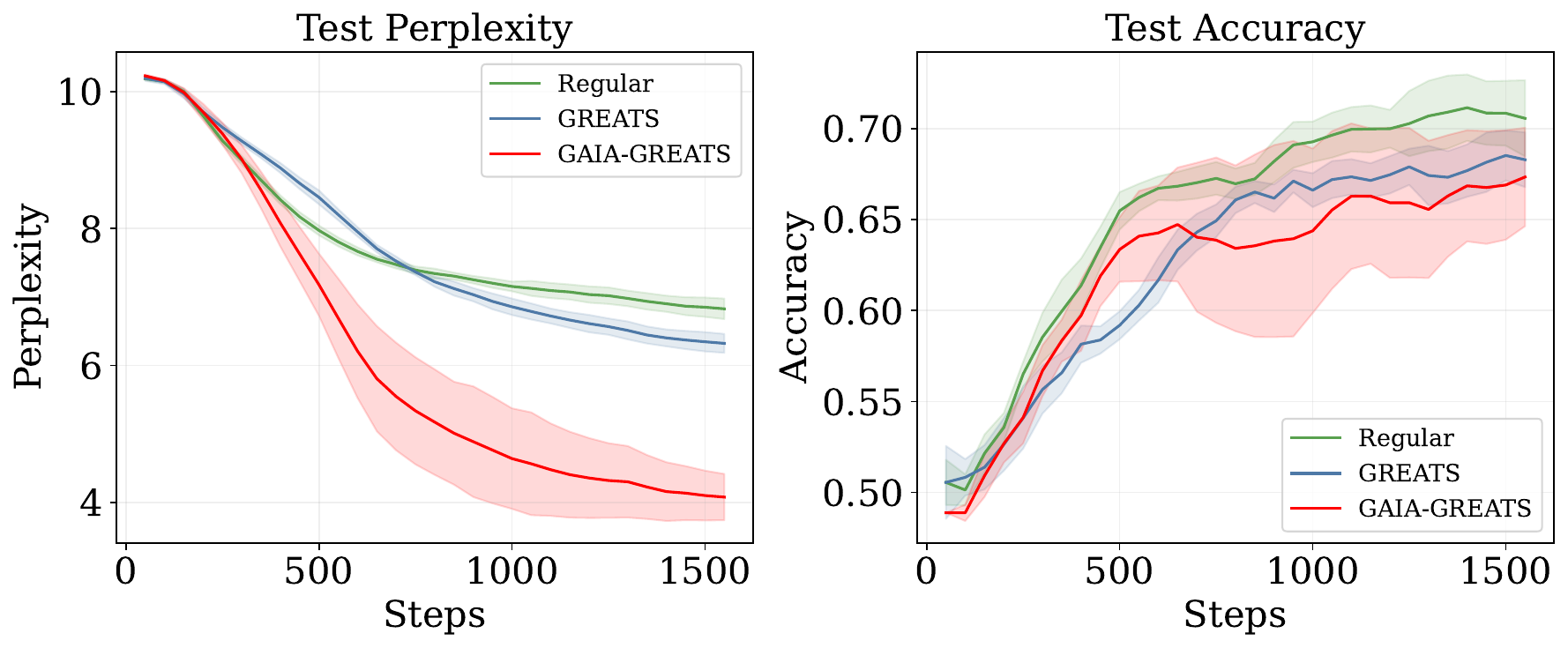}
        \caption{US Foreign Policy} 
        \label{fig:res_policy}
    \end{subfigure}
    \hfill
    \begin{subfigure}{0.48\textwidth}
        \centering
        \includegraphics[width=\linewidth]{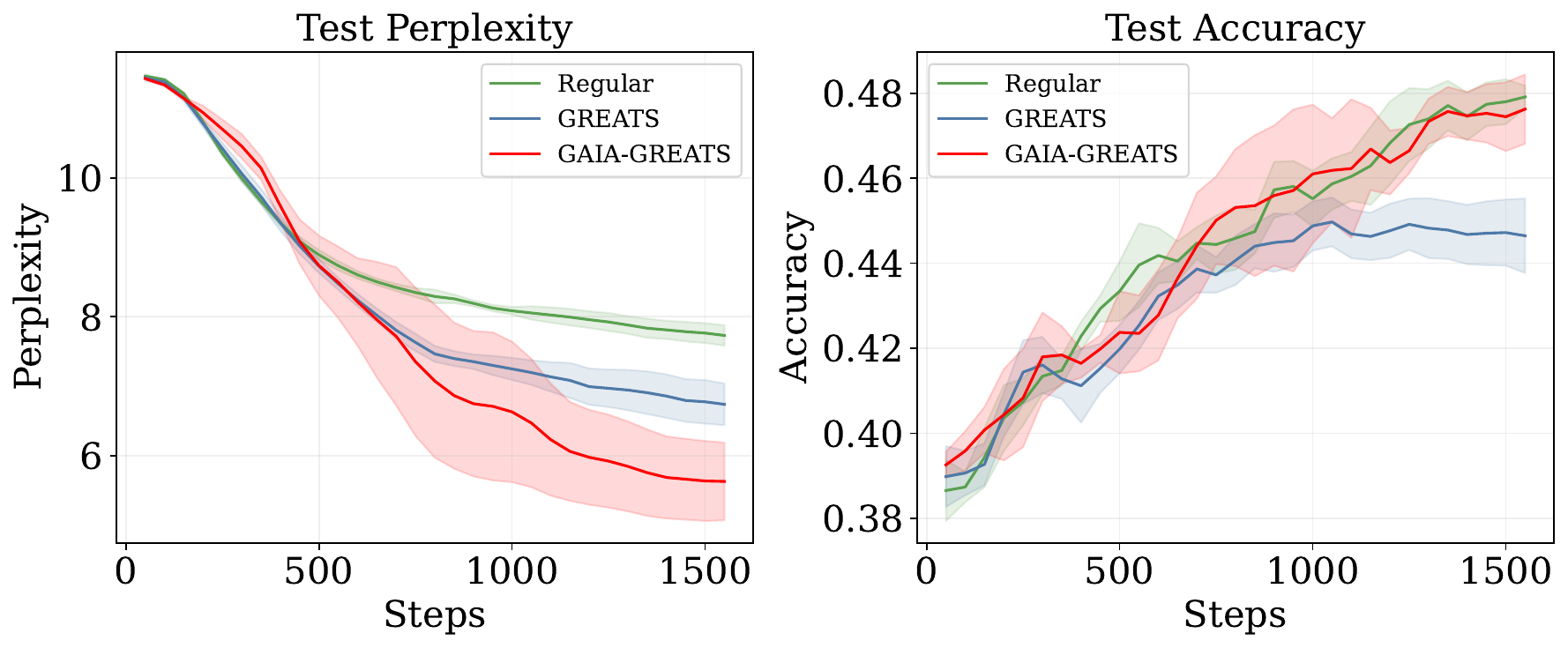}
        \caption{Astronomy}
        \label{fig:res_astronomy}
    \end{subfigure}

    \vspace{-5pt}
    \caption{Performance comparison on additional \mmlu{} subjects (\textit{Abstract Algebra, Anatomy, US Foreign Policy, Astronomy}). We evaluate \gaia{}-\greats{} against \textit{Regular} training and the \greats{} baseline, reporting both test perplexity (PPL) and accuracy. Notably, our method achieves a significant and consistent reduction in PPL across all tasks. While the accuracy metrics exhibit some variance, typical for the inherent stochasticity of multiple-choice tasks, \gaia{}-\greats{} maintains competitive performance.}
    \label{fig:additional_mmlu}
    \vspace{-10pt}
\end{figure*}

Figure~\ref{fig:additional_mmlu} presents a detailed performance comparison across various \textsc{MMLU} subjects, including \textit{Abstract Algebra, Anatomy, US Foreign Policy, and Astronomy}.
Our method \textbf{substantially outpaces} both the Regular training and \greats{} baselines, establishing significantly faster convergence and achieving a \textbf{noticeably lower convergence floor} in terms of test perplexity (PPL).
This decisive advantage in PPL underscores our method's superior efficiency in identifying and learning from high-utility data points.
Regarding downstream accuracy, we observe a higher degree of variance compared to the baselines.
This phenomenon stems from our framework's inherent mechanism: by utilizing \greats{} as the scoring function---which is fundamentally designed for loss maximization---our approach \textbf{amplifies the optimization of PPL}.
While this leads to a superior state of language modeling, it simultaneously introduces instability in accuracy, a metric further sensitized by the discrete nature of multiple-choice tasks.
Nevertheless, the overall results demonstrate that our framework possesses \textbf{formidable convergence capabilities} and robust generalization, maintaining highly competitive performance across all benchmarks.

\subsection{Computational Efficiency.} \label{app:time}

\begin{table}[t]
    \centering
    \footnotesize
    \caption{Analysis of the additional computational overhead introduced by our method across different benchmarks. We report the training set size  and the specific incremental costs: embedding acquisition and Gaussian Process training. All time measurements are reported in seconds.}
    \label{tab:time_efficiency}
    \begin{footnotesize}
    \begin{sc}
    \begin{tabular}{lccc}
        \toprule
        \textbf{Task} & \textbf{Size} & \textbf{Emb. Time (s)} & \textbf{GP Time (s)} \\
        \midrule
        \mmlu & 13,533 & 11.68 & 11.20 \\
        \samsum & 52,002 & 17.73 & 10.54 \\
        \tydiqa & 108,271 & 67.33 & 11.55 \\
        \bottomrule
    \end{tabular}
    \end{sc}
    \end{footnotesize}
\end{table}

The computational overhead of our framework is negligible compared to the overall training time. As shown in Table~\ref{tab:time_efficiency}, the additional latency introduced by our method consists solely of embedding acquisition and GP training. Even on our largest dataset (\tydiqa{}, $N$=108,271), these steps combined require less than 80 seconds, while on \mmlu{}, the total overhead is merely $\approx$23 seconds. Furthermore, given that subsequent sampling and probability update operations are of $O(1)$ complexity, these processes impose a marginal burden relative to the backpropagation costs of Large Language Models. Since the \greats{} baseline already maintains a training time comparable to standard full-data training \cite{wang2024greats}, and \gaia{}-\greats{} operates as a lightweight wrapper over these signals, our method achieves superior data selection without introducing significant latency.

\subsection{Visualization of Sampling Dynamics.}\label{app:sampling_dynamics}
Figure~\ref{fig:strategy_scatter} visualizes the indices of sampled data points and their corresponding source strategies to illustrate the sampling behavior of our framework. We observe that the sampling distribution evolves significantly throughout the training process. In the initial phase, the framework exhibits a strong concentration on specific data regions, most notably the dense clusters sampled from strategies 2 and 10, facilitating \textbf{rapid targeted learning} on high-utility samples. As training progresses, however, the sampling pattern naturally shifts towards a more dispersed distribution, covering a broader range of the dataset. This temporal evolution demonstrates that our framework effectively prioritizes efficient convergence through focused exploitation in the early stages, before seamlessly transitioning to broader exploration to ensure sufficient data diversity for robust model generalization.

\begin{figure*}
    \centering
    \includegraphics[width=\linewidth]{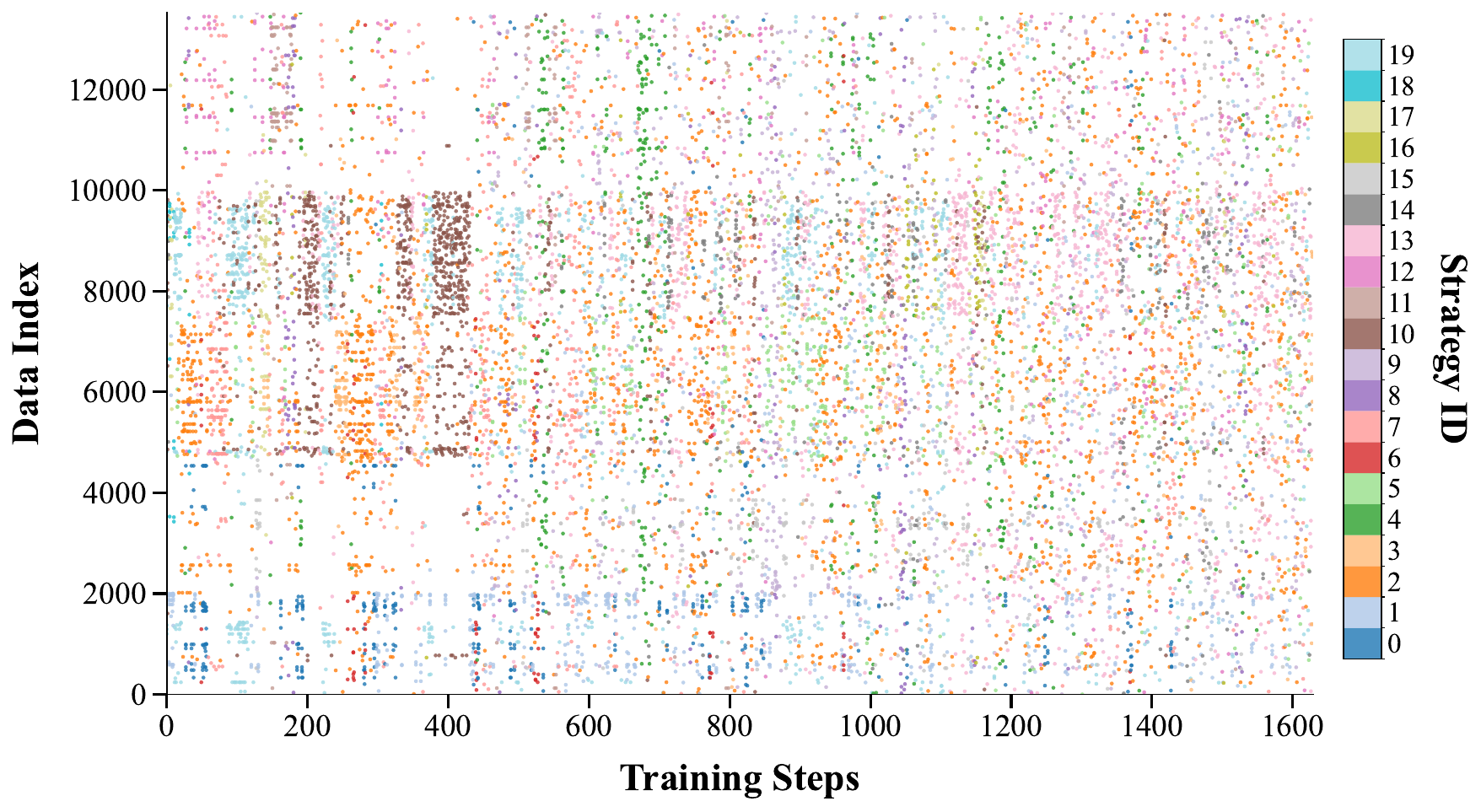}
    \caption{Visualization of sampling dynamics on \mmlu-Sociology over the course of training. The scatter plot maps sampled data indices (y-axis) against training steps (x-axis), with colors representing the distinct strategies from which samples were drawn. The distinct clustering observed in the initial phase demonstrates that our method prioritizes high-value data first. As training progresses, the selection becomes increasingly broad, thereby achieving greater diversity in the later stages.}
    \label{fig:strategy_scatter}
\end{figure*}

\begin{figure}[ht]
    \centering
    \begin{subfigure}{0.48\linewidth}
        \centering
        \includegraphics[width=\linewidth]{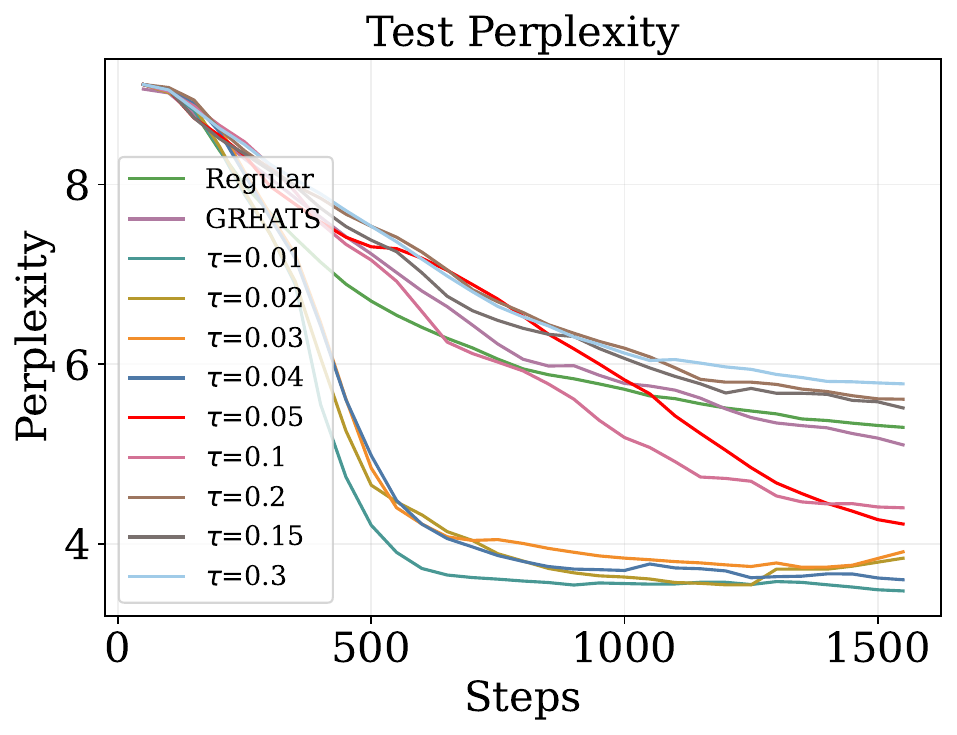}
        \caption{\mmlu-Sociology}
        \label{fig:t_sociology}
    \end{subfigure}\hfill 
    \begin{subfigure}{0.48\linewidth}
        \centering
        \includegraphics[width=\linewidth]{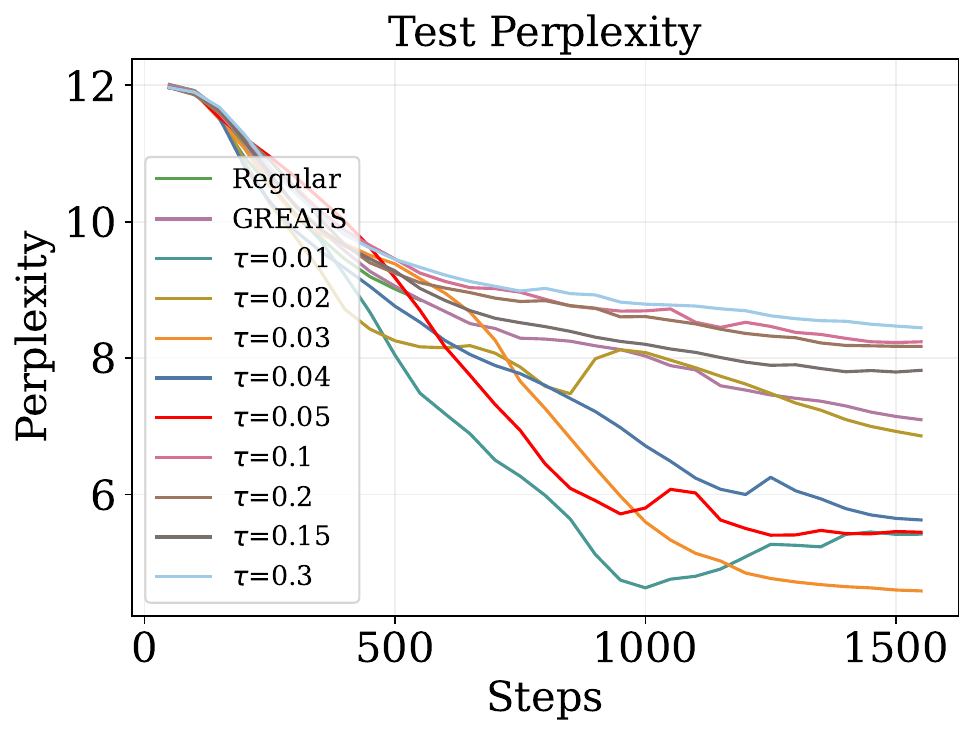}
        \caption{\mmlu-Anatomy}
        \label{fig:t_anatomy}
    \end{subfigure}
    \caption{Impact of sampling temperature on test perplexity for \mmlu-Sociology and \mmlu-Anatomy.}
    \label{fig:lt_impact}
\end{figure}

\subsection{Effectiveness of Adaptive Sampling Temperature.}\label{app:adaptive_sensitivity}
To balance exploration and exploitation during the sampling process, we introduce an \textbf{adaptive temperature} mechanism (See Section~\ref{sec:batch_sampling}). In the early stages of training, a lower temperature is employed to prioritize high-utility samples, facilitating rapid model alignment. As training progresses, the temperature is gradually increased to incorporate greater data diversity and prevent overfitting.

To verify its importance, we conduct an ablation study by disabling the adaptive schedule and using a fixed temperature instead. As shown in Figure~\ref{fig:scoring}, \textbf{w/o TA} results indicate the absence of adaptive temperature leads to significant training instability, with the model often experiencing performance \textbf{corruption} during the middle stages of training. This empirical evidence confirms that the adaptive temperature mechanism is vital for maintaining a stable optimization trajectory. It effectively prevents the model from over-focusing on a narrow subset of data early on, while ensuring sufficient diversity in later stages to mitigate overfitting and ensure robust convergence.

\subsection{Effect of sampling temperature.}\label{app:adaptive_mech}
The sampling temperature, $\tau$, plays a critical role in governing the sharpness of the sampling distribution derived from the Gaussian Process. As illustrated in Figure~\ref{fig:lt_impact}, results on \mmlu-Sociology and \mmlu-Anatomy, we observe a consistent pattern across different subjects. A lower temperature (e.g., $\tau=0.1$) leads to a significantly steeper and more rapid decline in test perplexity during the initial training phase. This is because a lower temperature amplifies the differences between data valuation scores, encouraging the model to aggressively exploit high-utility samples.

In contrast, a higher temperature results in a more gradual and flatter convergence curve. By smoothing the probability distribution, a larger $\tau$ increases the diversity of the selected batches but slows down the identification of the most informative data points. While lower temperatures accelerate early-stage convergence, they require a reliable valuation signal to avoid over-focusing on a narrow data subset. These results demonstrate that the sampling temperature serves as a vital trade-off parameter between aggressive exploitation of perceived data utility and the maintenance of batch diversity.




\subsection{Embedding methods}\label{app:embedding_dim}

We ablate the proxy semantic space using three different sentence transformers in Figure~\ref{fig:embs}. While all models successfully guide optimization, richer extractors like \texttt{all-mpnet-base-v2} and \texttt{multi-qa} yield a lower test perplexity floor than the lightweight \texttt{all-MiniLM-L6}. This confirms \gaia{}'s robustness to embedding choices, though higher-quality semantic representations explicitly improve the final optimization bounds.

\begin{figure*}[t]
    \centering
    \begin{subfigure}{0.48\textwidth}
        \centering
        \includegraphics[width=\linewidth]{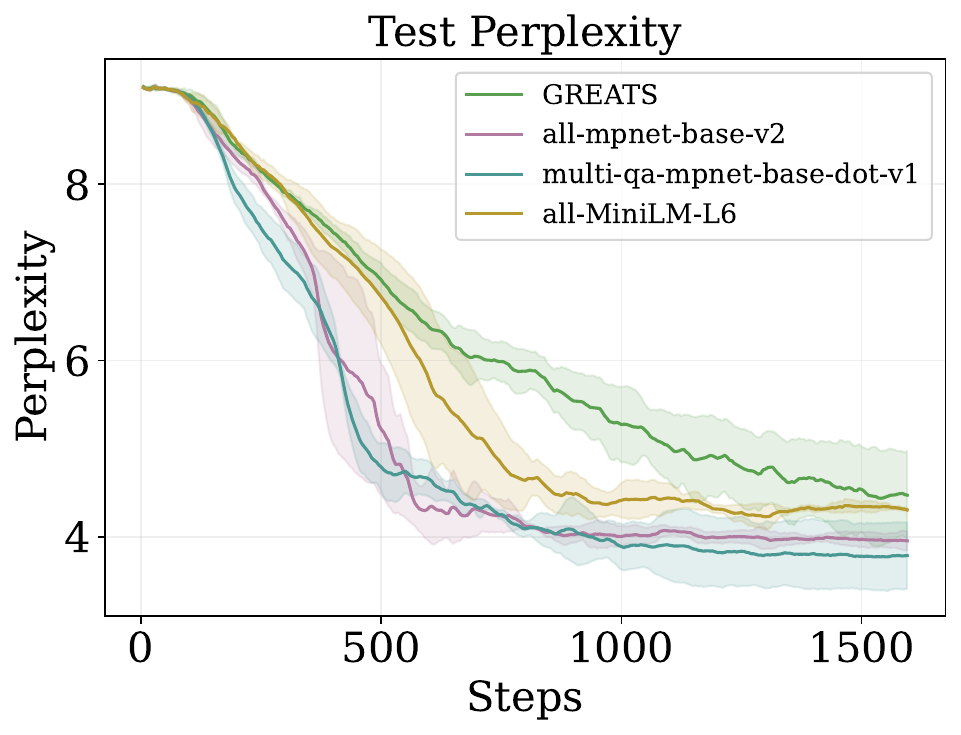}
        \caption{Effect of different pre-trained embedding models on test perplexity (\mmlu-Sociology)}
        \label{fig:embs}
    \end{subfigure}
    \hfill
    \begin{subfigure}{0.48\textwidth}
        \centering
        \includegraphics[width=\linewidth]{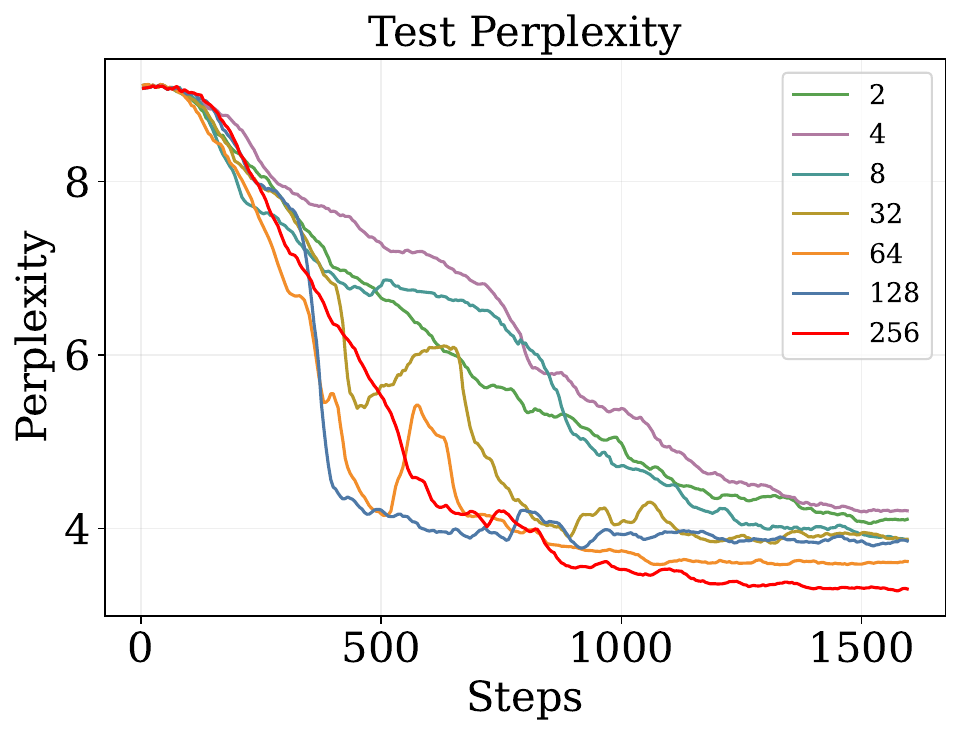}
        \caption{Effect of SVD projection dimensionality ($d$) on test perplexity (\mmlu-Sociology)}
        \label{fig:dim}
    \end{subfigure}
\end{figure*}



\subsection{Embedding Dimension}\label{app:embedding_methods}
To accelerate GP inference, \gaia{} uses SVD to project high-dimensional embeddings into a compact space. As shown in Figure~\ref{fig:dim}, we observe that the model's performance consistently improves as the projection dimension increases. Higher dimensions preserve more comprehensive semantic information from the raw embeddings, leading to faster convergence and lower test perplexity. 


\subsection{Warm-up: Impact of Initial Observation Size}\label{app:warmup}
\begin{figure}
    \centering
    \includegraphics[width=0.6\linewidth]{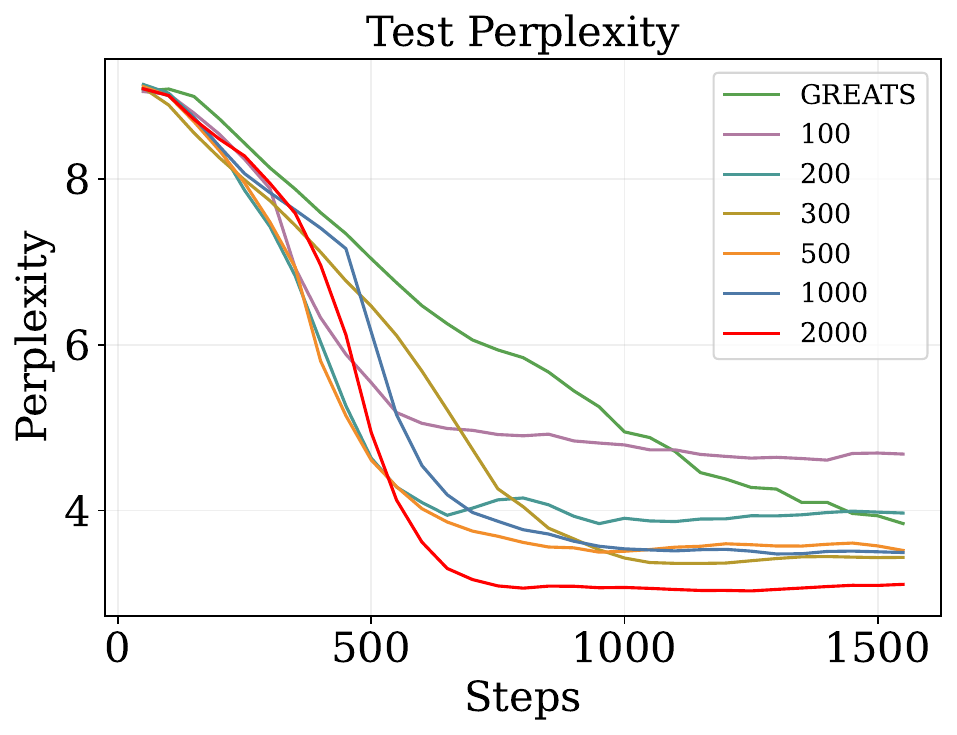}
    \caption{Impact of warm-up dataset size on test perplexity (\mmlu-Sociology). We evaluate the effect of varying the number of initial samples used to fit the Gaussian Process prior.}
\label{fig:warmup}
\end{figure}

Figure~\ref{fig:warmup} illustrates warm-up dataset size impact on our framework. Extreme undersampling (e.g., $N=100$) fails to accurately model the utility manifold, while N=200 brings a noticeable improvement. Performance plateaus across $N\in\{300,500,1000\}$, indicating that our default N=500 achieves an optimal balance between sample efficiency and computational overhead. Further scaling to $N=2000$ yields another performance leap, suggesting that a larger initial budget enables higher-resolution manifold modeling for better downstream optimization.

\end{document}